\documentclass{article}

\usepackage{amsmath,amsthm,amssymb,amsfonts}
\usepackage{stmaryrd}
\usepackage{graphicx}
\usepackage{titlesec}
\usepackage{hyperref}
\usepackage{xspace}
\usepackage{float}
\usepackage{caption}
\usepackage{epstopdf}
\usepackage{fullpage}
\usepackage{color}

\allowdisplaybreaks

\usepackage{authblk}

\newcommand{\bc}[1]{\left\{{#1}\right\}}
\newcommand{\br}[1]{\left({#1}\right)}
\newcommand{\bs}[1]{\left[{#1}\right]}
\newcommand{\abs}[1]{\left| {#1} \right|}

\newcommand{\norm}[1]{\left\| {#1} \right\|}


\newcommand{\Pp}[2]{\underset{#1}{\mathbb{P}}\bs{{#2}}}


\newcommand{\Ee}[2]{\underset{#1}{\mathbb{E}}\bs{{#2}}}

\newtheorem{lemma}{Lemma}
\newtheorem{theorem}{Theorem}
\newtheorem{proposition}{Proposition}

\newtheorem{corollary}{Corollary}

\newtheorem{definition}{Definition}

\DeclareMathOperator*{\argmin}{arg\,min}

\newcommand{\ind}[1]{\llbracket {#1} \rrbracket}

\makeatletter
\newcommand{\newreptheorem}[2]{\newtheorem*{rep@#1}{\rep@title} 
	\newenvironment{rep#1}[1]{\def\rep@title{#2 \ref*{##1}}\begin{rep@#1}}{\end{rep@#1}}
}
\makeatother

\newreptheorem{lemma}{Lemma}
\newreptheorem{theorem}{Theorem}
\newreptheorem{claim}{Claim}
\newreptheorem{proposition}{Proposition}
\newreptheorem{corollary}{Corollary}


\title{Minimax Lower Bounds for Cost Sensitive Classification}

\author[1]{Parameswaran Kamalaruban\thanks{kamalaruban.parameswaran@epfl.ch}}
\author[2]{Robert C.~Williamson\thanks{bob.williamson@anu.edu.au}}
\affil[1]{School of Engineering, \'Ecole Polytechnique F\'ed\'erale de Lausanne}
\affil[2]{Research School of Computer Science, The Australian National University}

\date{\today}

\begin{document}
	
	\maketitle
	
	\begin{abstract}
The cost-sensitive classification problem plays a crucial role in mission-critical machine learning applications, and differs with traditional classification by taking the misclassification costs into consideration. Although being studied extensively in the literature, the fundamental limits of this problem are still not well understood. We investigate the hardness of this problem by extending the standard minimax lower bound of balanced binary classification problem (due to \cite{massart2006risk}), and emphasize the impact of cost terms on the hardness.
\end{abstract}
	
	\section{Introduction}
\label{sec:introduction}
The central problem of this paper is the cost-sensitive binary classification problem, where different costs are associated with different types of mistakes. Several important machine learning applications such as medical decision making, targeted marketing, and intrusion detection can be naturally formalized as cost-sensitive classification setup (\cite{abe2004iterative}). In these domains, the cost of missing a target is much higher than that of a false-positive, and classifiers that do not take misclassification costs into account do not perform well.

The cost-sensitive classification problem has been extensively studied, and people have developed efficient algorithms with provable guarantees on the (generalization) error \cite{chan1998toward, elkan2001foundations, zadrozny2003cost,zhao2015cost, knoll1994cost, brefeld2003support}. These methods primarily take existing classification methods based on empirical risk minimization and try to adapt them in various ways to be sensitive to these misclassification costs.

Despite all these efforts, the understanding of the fundamental limits of this problem is still missing. In this paper, we study the hardness of this problem by obtaining minimax lower bounds. In particular, we are interested in understanding how the cost parameter influences the hardness or complexity of the cost-sensitive classification.

\paragraph{Minimax Lower Bounds}
Understanding the hardness or fundamental limits of a learning problem is important for practice for the following reasons:
\begin{itemize}
	\item They give an estimate on the number of samples required for a good performance of a learning algorithm.
	\item They give an intuition about the quantities and structural properties which are essential for a learning process and therefore about which problems are inherently easier than others.
	\item They quantify the influence of parameters and indicate what prior knowledge is relevant in a learning setting and therefore they guide the analysis, design, and improvement of learning algorithms. 
\end{itemize}
Note that the ``hardness'' here corresponds to lower bounds on sample complexity (and not computational complexity). We demonstrate the hardness of a learning problem (and the instantiations of it) by obtaining lower bounds for its \textit{minimax risk} (formally defined later).

In section~\ref{sec:hard-cost-sense} we review and extend techniques due to \cite{le2012asymptotic} and \cite{assouad1983deux} for obtaining minimax lower bounds for learning problems. Both techniques proceed by reducing the learning problem to an easier hypothesis testing problem  \cite{Tsybakov2009nonparbook,yang1999information,yu1997assouad}, then proving a lower bound on the probability of error in testing problems.

Le Cam's method, in its simplest form, provides lower bounds on the error in simple binary hypothesis testing problems, by using the connection between hypothesis testing and total variation distance. Consider the parameter estimation problem with parameter space $\Theta = \bc{-1,1}^m$ for some $m$, where the objective is to determine every bit of the underlying unknown parameter $\theta \in \Theta$. In that setting, a key result known as \textit{Assouad's lemma} says that the difficulty of estimating the entire bit string $\theta$ is related to the difficulty of estimating each bit of $\theta$ separately, assuming all other bits are already known.

In binary classification problems, the Tsybakov noise condition \cite{tsybakov2004optimal} governs the fraction of data that lies close to the classification boundary. We present a cost-sensitive version of this assumption \eqref{noise-condition}, and then provide minimax lower bounds for cost-sensitive classification under this assumption (Theorem~\ref{main-theo-cost-classi}).

The paper is organized as follows. In Section~\ref{sec:preliminaries} we formally introduce the general learning task, cost-sensitive classification problem, Bayes classifier and minimax risk. In Section~\ref{sec:hard-cost-sense} we consider our main problem - hardness of the cost-sensitive classification problem. Section~\ref{sec:conclusion} concludes with a brief discussion. Proofs of our results are deferred to Appendix~\ref{sec:proofs}.

	\section{Preliminaries and Background}
\label{sec:preliminaries}
This section provides the necessary background on general learning problem, cost-sensitive classification problem, and some decision theoretic notions associated with them.

\subsection{Notation}
\label{subsec:notation}
We require the following notation and definitions. The real numbers are denoted $\mathbb{R}$, and the non-negative reals $\mathbb{R}_{+}$.
%

\paragraph{Probabilities and Expectations}
Let $\Omega$ be a measurable space and let $\mu$ be a probability measure on $\Omega$. $\Omega^n$ denotes the product space $\Omega \times \cdots \times \Omega$ endowed with the product measure $\mu^n$. The notation $\textsf{X} \sim \mu$ means $\textsf{X}$ is randomly drawn according to the distribution $\mu$. $\mathbb{P}_\mu\bs{E}$ and $\Ee{\textsf{X} \sim \mu}{f\br{\textsf{X}}}$ will denote the probability of a statistical event $E$ and the expectation of a random variable $f\br{\textsf{X}}$ with respect to $\mu$ respectively. We will use capital letters $\textsf{X},\textsf{Y},\textsf{Z},\dots$ for random variables and lower-case letters $x,y,z,\dots$ for their observed values in a particular instance. We will denote by $\mathcal{P}\br{\mathcal{X}}$ the set of all probability distributions on an alphabet $\mathcal{X}$.

\paragraph{Metric Spaces}
The Hamming distance on $\mathbb{R}^n$ is defined as
\begin{equation}
\label{eq-hamming-distance}
\rho_\mathrm{Ha}\br{x,x'} := \sum_{i=1}^{n}{\ind{x_i \neq x_i'}} ,
\end{equation}
where $\ind{P}=1$ if $P$ is true and $\ind{P}=0$ otherwise. For a probability measure $\mu$ on a measurable space $\Omega$ and $1 \leq p \leq \infty$, let $L_p \br{\mu}$ be the space of measurable functions on $\Omega$ with a finite norm
\begin{equation}
\label{eq-lp-norm-func}
\norm{f}_{L_p \br{\mu}} := \br{\int{\abs{f}^p d\mu}}^{1/p} .
\end{equation}
$\mathcal{Y}^{\mathcal{X}}$ represents the set of all measurable functions $f:\mathcal{X}\rightarrow\mathcal{Y}$. Define $\bar{c} := 1-c$, for $c \in \bs{0,1}$. We write $[n]:=\{1,\dots,n\}$, and $x \wedge y := \min \br{x,y}$. A mapping $t \mapsto \mathrm{sign}\br{t}$ is defined by
\[
\mathrm{sign}\br{t} = \begin{cases}
1 & \text{if } t \geq 0 \\
-1 & \text{otherwise}
\end{cases} .
\] 

\subsection{General Learning Task}
\label{subsec:general-learning-task}

A \textit{general learning task} in statistical decision theory can be viewed as a two player game between the \textit{decision maker} and \textit{nature} as follows: Given the parameter space $\Theta$, observation space $\mathcal{O}$, and decision space $\mathcal{A}$, and the loss function $\ell : \Theta \times \mathcal{A} \rightarrow \mathbb{R}_+$,
\begin{itemize}
	\item Nature chooses $\theta \in \Theta$, and generates the \emph{iid} data $\textsf{O}_1^n \sim P_\theta^n \in \mathcal{P}\br{\mathcal{O}^n} := \bc{P_\theta^n := P_\theta \times \cdots \times P_\theta : \theta \in \Theta}$, where $P_\theta$ is the distribution determined by the parameter $\theta$,
	\item The decision maker observes the data $\textsf{O}_1^n$, makes her own decision $a \in \mathcal{A}$ (using a stochastic \textit{decision rule}), and incurs loss $\ell\br{\theta,a}$. A stochastic decision rule (denoted by $A:\mathcal{O}^n \rightsquigarrow \mathcal{A}$) is a mapping from the observation space $\mathcal{O}^n$ to the space of probability measures over the action space $\mathcal{A}$.
\end{itemize} 
The general learning task is compactly denoted by the tuple $(\Theta,\mathcal{O},\mathcal{A},\ell)$. Throughout the paper we assume $\Theta$ to be finite and $\mathcal{A}$ to be closed, compact, set in order to provide a clear presentation by avoiding the measure theoretic complexities.

\paragraph{Regret:}
The loss relative to the best action is the \textit{regret} (for any $\theta \in \Theta$ and $a \in \mathcal{A}$)
\begin{equation}
	\label{regret-eq}
	\Delta \ell \br{\theta,a} ~:=~ \ell \br{\theta,a} - \inf_{a' \in \mathcal{A}}{\ell \br{\theta,a'}}.
\end{equation}

\paragraph{Risk:}
The quality of the final action chosen by the decision maker when
she uses the the stochastic decision rule $A:\mathcal{O} \rightsquigarrow \mathcal{A}$ can be evaluated using the notion of \textit{risk}: for any $\theta \in \Theta$: 
\begin{equation}
	\label{full-risk-eq}
	R_{\ell} \br{\theta,A} ~:=~ \Ee{\textsf{O}_1^n \sim P_\theta^n}{\Ee{a \sim A(\textsf{O}_1^n)}{\ell \br{\theta,a}}}.
\end{equation}
Similarly we can define the risk in terms of regret as follows: 
\begin{equation}
	\label{full-risk-in-regret-eq}
	R_{\Delta \ell} \br{\theta,A} ~:=~ \Ee{\textsf{O}_1^n \sim P_\theta^n}{\Ee{a \sim A(\textsf{O}_1^n)}{\Delta \ell \br{\theta,a}}}.
\end{equation}

For any fixed (unknown) parameter $\theta \in \Theta$, the goal is to find an optimal stochastic decision rule. Two main approaches to achieve this goal are:
\begin{itemize}
	\item \textit{Bayesian approach} (average case analysis), which is more appropriate if the decision maker has some intuition about $\theta$, given in the form of a prior probability distribution $\pi$, and
	\item \textit{Minimax approach} (worst case analysis), which is more appropriate if the decision maker has no prior knowledge concerning $\theta$. In this paper, we focus on this strategy. 
\end{itemize}

We measure the difficulty of the general learning task by the \textit{minimax risk} defined as,
\begin{align}
	\label{minimax-risk-eq}
	\underline{R}_{\ell}^\star ~:=~ \inf_{A}{\sup_{\theta \in \Theta}{R_{\ell} \br{\theta,A}}}
\end{align}
By replacing $\ell$ by $\Delta \ell$ in \eqref{minimax-risk-eq}, we obtain $\underline{R}_{\Delta \ell}^\star$. Below we discuss some specific instantiations (supervised learning, binary classification, and parameter estimation) of this general learning task.

\subsection{Supervised Learning Problem}
\label{subsec:supervised-learning-task}
Let $\mathcal{X} \times \mathcal{Y}$ be a measurable space, and let $D$ be an unknown joint probability measure on $\mathcal{X} \times \mathcal{Y}$. The set $\mathcal{X}$ is called the \textit{instance space}, the set $\mathcal{Y}$ the \textit{outcome space}. Let $\textsf{S} = \bc{\br{\textsf{X}_i , \textsf{Y}_i}}_{i=1}^n \in \br{\mathcal{X} \times \mathcal{Y}}^n$ be a finite training sample, where each pair $\br{\textsf{X}_i , \textsf{Y}_i}$ is generated independently according to the unknown probability measure $D$. Then the goal of a learning algorithm is to find a function $f:\mathcal{X} \rightarrow \mathcal{Y}$ which given a new instance $x \in \mathcal{X}$, predicts its label to be $\hat{y} = f\br{x}$.

In order to measure the performance of a learning algorithm, we define an \textit{error function} $d:\mathcal{Y} \times \mathcal{Y} \rightarrow \mathbb{R}_+$,
where $d\br{y,\hat{y}}$ quantifies the discrepancy between the predicted value $\hat{y}$ and the actual value $y$. The performance of any function $f:\mathcal{X} \rightarrow \mathcal{Y}$ is then measured in terms of its \textit{generalization error}, which is defined as the expected error:
\begin{equation}
\label{generalization-error-eq}
\mathrm{er}_d \br{f,D} ~:=~ \Ee{\br{\textsf{X},\textsf{Y}} \sim D}{d\br{\textsf{Y},f\br{\textsf{X}}}} ,
\end{equation}
where the expectation is taken with respect to the probability measure $D$ on the data $\br{\textsf{X},\textsf{Y}}$. The best estimate $f_D^\star \in \mathcal{Y}^{\mathcal{X}}$ is therefore the one for which the generalization error is as small as possible, that is,
\begin{equation}
\label{optimal-hypothesis-all}
f_D^\star ~:=~ \argmin_{f \in \mathcal{Y}^{\mathcal{X}}}{\mathrm{er}_d \br{f,D}} .
\end{equation}
The function $f_D^\star$ is called the target hypothesis. Given a fixed hypothesis class $\mathcal{F} \subseteq \mathcal{Y}^\mathcal{X}$, the goal of a learning algorithm is thus to choose the hypothesis function $f^\star \in \mathcal{F}$ which has the smallest generalization error on data drawn according to the underlying probability measure $D$,
\begin{equation}
\label{optimal-hypothesis-cF}
f_{D,\mathcal{F}}^\star ~:=~ \argmin_{f \in \mathcal{F}}{\mathrm{er}_d \br{f,D}} .
\end{equation}
We will assume in the following that such an $f_{D,\mathcal{F}}^\star$ exists.

The supervised learning problem can be derived from the general learning task with the following instantiation: 
\begin{itemize}
	\item the observation space is $\mathcal{O} = \mathcal{X} \times \mathcal{Y}$, where $\mathcal{X} \subseteq \mathbb{R}^d$,
	\item the action space is $\mathcal{A} = \mathcal{F} \subseteq \mathcal{Y}^\mathcal{X}$, 
	\item the learning algorithm is $A = \hat{f}$, and
	\item the loss function is 
	\[
	\ell_d : \Theta \times \mathcal{F} \ni \br{\theta,f} \mapsto \ell_d \br{\theta,f} := \mathrm{er}_d \br{f,P_\theta} \in \mathbb{R}_+ ,
	\] 
	where $P_\theta$ is the probability measure associated with the parameter $\theta \in \Theta$. One needs to carefully distinguish between the error function $d: \mathcal{Y} \times \mathcal{Y} \rightarrow \mathbb{R}$ which acts on the observation space, and the loss function $\ell_d: \Theta \times \mathcal{A} \rightarrow \mathbb{R}$ which acts on the parameter and decision spaces.
\end{itemize}

\subsection{Binary Classification}
\label{subsec:binary-classification}
When $\mathcal{Y} = \bc{-1,1}$, the supervised learning task is called binary classification, which is a central problem in machine learning (\cite{devroye2013probabilistic}). A common error function for binary classification is simply the zero-one error defined by $d_{0-1} \br{y,\hat{y}} = \ind{\hat{y} \neq y}$. In this case the generalization error of a classifier $f:\mathcal{X} \rightarrow \bc{-1,1}$ w.r.t.\ a probability measure $D$ is simply the probability that it predicts the wrong label on a randomly drawn example:
\begin{align*}
\mathrm{er}_{d_{0-1}} \br{f,D} ~:=~& \Ee{\br{\textsf{X},\textsf{Y}} \sim D}{d_{0-1}\br{\textsf{Y},f\br{\textsf{X}}}} \\
~=~& \Pp{\br{\textsf{X},\textsf{Y}}\sim D}{f\br{\textsf{X}} \neq \textsf{Y}} .
\end{align*}
The optimal error over all possible classifiers $f:\mathcal{X} \rightarrow \bc{-1,1}$ for a given probability measure $D$ is called the \textit{Bayes error} (minimum generalization error) associated with $D$:
\begin{equation}
\label{bayes-error-eq}
\underline{\mathrm{er}}_{d_{0-1}} \br{D} ~:=~  \inf_{f \in \bc{-1,1}^\mathcal{X}}{\mathrm{er}_{d_{0-1}} \br{f,D}} .
\end{equation}
It is easily verified that, if $\eta_D\br{x}$ is defined as the conditional probability (under $D$) of a positive label given
$x$, $\eta_D\br{x} = \mathbb{P}_D\bs{\textsf{Y}=1 \mid \textsf{X}=x}$, then the classifier $f_D^\star:\mathcal{X} \rightarrow \bc{-1,1}$ given by
\[
f_D^\star \br{x} = \begin{cases}
1 & \text{if } \eta_D\br{x} \geq 1/2 \\
-1 & \text{otherwise}
\end{cases}
\]
achieves the Bayes error. Such a classifier is termed a \textit{Bayes classifier}. In general, $\eta_D$ is unknown so the above classifier cannot be constructed directly. 

By defining $\ell_{d_{0-1}} : \Theta \times \mathcal{F} \ni \br{\theta , f} \mapsto \ell_{d_{0-1}} \br{\theta , f} := \mathrm{er}_{d_{0-1}} \br{f,P_\theta} \in \mathbb{R}_+$, the binary classification problem can be compactly represented by the tuple $\br{\Theta,\br{\mathcal{X} \times \bc{-1,1}}^n,\mathcal{F},\ell_{d_{0-1}}}$. Using the Bayes rule, the distribution $\mathbb{P}_{\theta}$ (which is a short hand for $\mathbb{P}_{P_\theta}$) can be decomposed as follows: 
\begin{align*}
\mathbb{P}_{\theta} \bs{\textsf{X}=x,\textsf{Y}=1} ~=~& \mathbb{P}_{\theta} \bs{\textsf{X}=x} \cdot \mathbb{P}_{\theta} \bs{\textsf{Y}=1 \mid \textsf{X}=x} \\
~=~& M_{\theta} \br{x} \cdot \eta_{\theta} \br{x} , 
\end{align*} 
where $M_{\theta} \br{x} := \mathbb{P}_{\theta} \bs{\textsf{X}=x}$ and $\eta_{\theta} \br{x} := \mathbb{P}_{\theta} \bs{\textsf{Y}=1 \mid \textsf{X}=x}$.  

\subsection{Cost-sensitive Binary Classification}
\label{subsec:cost-classification}
Suppose we are given gene expression profiles for some number of patients, together with labels for these patients indicating whether or not they had a certain form of a disease. We want to design a learning algorithm which automatically recognizes the diseased patient based on the gene expression profile of a patient. In this case, there are different costs associated with different types of mistakes (the health risk for a false label ``no'' is much higher than for a false ``yes''), and the \textit{cost-sensitive error function}  (for $c \in (0,1)$) can be used to capture this:
\[
d_c \br{y,\hat{y}} := \ind{\hat{y} \neq y} \cdot \bc{\bar{c} \cdot \ind{y=1} + c \cdot \ind{y=-1}} ,
\]
where $\bar{c}:= 1-c$. Then the performance measure (loss function) associated with the above cost-sensitive error function is given by 
\[
\ell_{d_{c}} : \Theta \times \mathcal{F} \ni \br{\theta , f} \mapsto \ell_{d_{c}} \br{\theta , f} := \mathrm{er}_{d_{c}} \br{f,P_\theta} \in \mathbb{R}_+ , 
\] 
where $\mathrm{er}_{d_{c}} \br{f,P_\theta}$ is given by 
\[
\Ee{\br{\textsf{X},\textsf{Y}} \sim P_\theta}{\ind{f\br{\textsf{X}} \neq \textsf{Y}} \cdot \bc{\bar{c} \cdot \ind{\textsf{Y}=1} + c \cdot \ind{\textsf{Y}=-1}}} .
\]

For any $\eta:\mathcal{X} \rightarrow \bs{0,1}$, and $f:\mathcal{X} \rightarrow \bc{-1,1}$, define the conditional generalization error (given $x \in \mathcal{X}$) as 
\begin{align*}
\mathrm{er}_{d_{c}} \br{f,\eta ; x} ~:=~& \Ee{\textsf{Y} \sim \eta\br{x}}{d_{c}\br{\textsf{Y},f\br{x}}} \\
~=~& \bar{c} \cdot \eta\br{x} \cdot \ind{f\br{x} \neq 1} \\
& + c \cdot \bar{\eta\br{x}} \cdot \ind{f\br{x} \neq -1} , 
\end{align*}
where $\bar{\eta\br{x}} := 1- \eta\br{x}$. Then $\mathrm{er}_{d_{c}} \br{f,\eta ; x}$ is minimized by 
\begin{align*}
f^\star\br{x} ~:=~& \argmin_{f \in \bc{-1,1}^\mathcal{X}} {\Ee{\textsf{Y} \sim \eta (x)}{d_{c}\br{\textsf{Y},f\br{x}}}} \\
~=~& \mathrm{sign}\br{\bar{c} \cdot \eta\br{x} - c \cdot \bar{\eta\br{x}}} \\
~=~& \mathrm{sign}\br{\eta\br{x} - c} ,
\end{align*}
since $\mathrm{er}_{d_{c}} \br{f^\star,\eta ; x} = \bar{c} \cdot \eta\br{x} \wedge c \cdot \bar{\eta\br{x}}$. In order to find the optimal classifier for each $\theta \in \Theta$ (associated joint probability measure $P_\theta$ on $\mathcal{X} \times \bc{-1,1}$) w.r.t.\ the cost-sensitive loss function, we note that
\begin{align*}
& \inf_{f \in \bc{-1,1}^\mathcal{X}} \ell_{d_{c}} \br{\theta,f} \\
~=~&
\inf_{f \in \bc{-1,1}^\mathcal{X}} \mathrm{er}_{d_{c}} \br{f,P_\theta} \\ 
~=~& \inf_{f \in \bc{-1,1}^\mathcal{X}} \Ee{\textsf{X} \sim M_\theta}{\Ee{\textsf{Y} \sim \eta_\theta \br{\textsf{X}}}{d_{c}\br{\textsf{Y},f\br{\textsf{X}}}}} \\
~=~& \Ee{\textsf{X} \sim M_\theta}{\inf_{f \in \bc{-1,1}^\mathcal{X}}{\Ee{\textsf{Y} \sim \eta_\theta \br{\textsf{X}}}{d_{c}\br{\textsf{Y},f\br{\textsf{X}}}}}} \\
~=~& \ell_{d_{c}} \br{\theta,f_\theta^\star}, 
\end{align*}
where $M_{\theta} \br{x} := \mathbb{P}_{\theta} \bs{\textsf{X}=x}$, $\eta_{\theta} \br{x} := \mathbb{P}_{\theta} \bs{\textsf{Y}=1 \mid \textsf{X}=x}$, and $f_\theta^\star$ is given by
\begin{equation}
\label{bayes-classifier-lc}
f_\theta^\star\br{x} := \begin{cases}
1, & \text{if } \eta_\theta(x) \geq c \\
-1, & \text{otherwise}
\end{cases} .
\end{equation}

We instantiate the regret, risk and minimax risk of the cost-sensitive classification problem as follows
\begin{align*}
\Delta \ell_{d_{c}} \br{\theta,f} :=& \ell_{d_{c}} \br{\theta,f} - \ell_{d_{c}} \br{\theta,f_\theta^\star} \\
\mathcal{R}_{\Delta \ell_{d_{c}}} \br{\theta,\hat{f}} :=& \Ee{\textsf{S} \sim P_\theta^n}{\Ee{f \sim \hat{f}\br{\textsf{S}}}{\Delta \ell_{d_c} \br{\theta,f}}} \\
\underline{\mathcal{R}}_{\Delta \ell_{d_{c}}}^\star :=& \inf_{\hat{f}}{\sup_{\theta \in \Theta}{\mathcal{R}_{\Delta \ell_{d_{c}}} \br{\theta,\hat{f}}}} ,
\end{align*}
respectively, where $\textsf{S} = \bc{\br{\textsf{X}_i , \textsf{Y}_i}}_{i=1}^n$. The following lemma from \cite{scott2012calibrated} will be used later.
\begin{lemma}[\cite{scott2012calibrated}]
	\label{cost-sensitive-lemma}
	Consider the binary classification problem $\br{\Theta,\br{\mathcal{X} \times \bc{-1,1}}^n,\mathcal{F},\ell_{d_{c}}}$. For any $f \in \mathcal{F}$ and $c \in (0,1)$,
	\[
	\Delta \ell_{d_{c}} \br{\theta,f} ~=~ \frac{1}{2} \cdot \Ee{\textnormal{\textsf{X}}\sim M_\theta}{\abs{\eta_\theta \br{\textnormal{\textsf{X}}} - c} \cdot \abs{f\br{\textnormal{\textsf{X}}} - f_\theta^\star \br{\textnormal{\textsf{X}}}}} ,
	\]
	where $f_\theta^\star$ is given by \eqref{bayes-classifier-lc}.
\end{lemma}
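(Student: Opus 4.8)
The plan is to prove the identity pointwise in $x$ and then integrate against the marginal $M_\theta$. Recall from the derivation preceding the lemma that $\ell_{d_{c}}\br{\theta,f} = \Ee{\textsf{X}\sim M_\theta}{\mathrm{er}_{d_{c}}\br{f,\eta_\theta;\textsf{X}}}$ and that the Bayes classifier $f_\theta^\star$ of \eqref{bayes-classifier-lc} minimizes the conditional error $\mathrm{er}_{d_{c}}\br{\cdot,\eta_\theta;x}$ separately for each $x$. Hence
\[
\Delta \ell_{d_{c}}\br{\theta,f} = \Ee{\textsf{X}\sim M_\theta}{\mathrm{er}_{d_{c}}\br{f,\eta_\theta;\textsf{X}} - \mathrm{er}_{d_{c}}\br{f_\theta^\star,\eta_\theta;\textsf{X}}},
\]
so it suffices to establish, for every fixed $x\in\mathcal{X}$, the pointwise statement
\[
\mathrm{er}_{d_{c}}\br{f,\eta_\theta;x} - \mathrm{er}_{d_{c}}\br{f_\theta^\star,\eta_\theta;x} = \frac{1}{2}\,\abs{\eta_\theta\br{x}-c}\cdot\abs{f\br{x}-f_\theta^\star\br{x}}.
\]

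Because $f\br{x}$ and $f_\theta^\star\br{x}$ both take values in $\bc{-1,1}$, the factor $\frac{1}{2}\abs{f\br{x}-f_\theta^\star\br{x}}$ equals the disagreement indicator $\ind{f\br{x}\neq f_\theta^\star\br{x}}$. The pointwise claim therefore reduces to showing that the conditional regret equals $\abs{\eta_\theta\br{x}-c}$ on the event $\bc{f\br{x}\neq f_\theta^\star\br{x}}$ and vanishes otherwise. I would carry this out by a short case analysis starting from the explicit form $\mathrm{er}_{d_{c}}\br{f,\eta;x} = \bar{c}\,\eta\br{x}\ind{f\br{x}\neq1} + c\,\bar{\eta\br{x}}\ind{f\br{x}\neq-1}$ recorded in the excerpt.

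When $f\br{x}=f_\theta^\star\br{x}$ both sides vanish trivially. When $f\br{x}\neq f_\theta^\star\br{x}$, I split on the sign of $\eta_\theta\br{x}-c$, which determines $f_\theta^\star\br{x}$ via \eqref{bayes-classifier-lc}. If $\eta_\theta\br{x}\geq c$ then $f_\theta^\star\br{x}=1$ and $f\br{x}=-1$, so the two conditional errors evaluate to $\bar{c}\,\eta_\theta\br{x}$ and $c\,\bar{\eta_\theta\br{x}}$ respectively, whose difference simplifies to $\eta_\theta\br{x}-c=\abs{\eta_\theta\br{x}-c}$. If $\eta_\theta\br{x}<c$ then $f_\theta^\star\br{x}=-1$ and $f\br{x}=1$, and the symmetric computation yields $c-\eta_\theta\br{x}=\abs{\eta_\theta\br{x}-c}$. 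In both subcases the regret matches $\abs{\eta_\theta\br{x}-c}$ on the disagreement event, which establishes the pointwise identity; taking the $M_\theta$-expectation then gives the lemma.

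I do not anticipate a genuine obstacle here: the result is a direct pointwise computation, and the only points requiring care are the bookkeeping in the case analysis and the observation that $\frac{1}{2}\abs{f\br{x}-f_\theta^\star\br{x}}$ collapses to the disagreement indicator precisely because both functions are $\bc{-1,1}$-valued.
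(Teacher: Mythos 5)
Your proposal is correct and takes essentially the same route as the paper's own proof: both establish the pointwise identity for the conditional regret at a fixed $x$ (using that $\tfrac{1}{2}\abs{f(x)-f_\theta^\star(x)}$ equals the disagreement indicator for $\bc{-1,1}$-valued functions) and then integrate against $M_\theta$. The only difference is presentational — the paper compresses the sign-of-$(\eta_\theta(x)-c)$ case analysis into a single displayed equality, whereas you spell it out explicitly.
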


\subsection{Parameter Estimation Problem}
\label{subsec:parameter-estimation}
The main goal of a parameter estimation problem is to accurately \textit{reconstruct the parameters} (with $\mathcal{A} = \Theta$, and $A = \hat{\theta}$) of the original distribution from which the data is generated, using the loss function of the type $\rho:\Theta\times\Theta\rightarrow\mathbb{R}$ (which satisfies symmetry and the triangle inequality). This problem is represented by the tuple $\br{\Theta,\mathcal{O}^n,\Theta,\rho}$. The minimax risk of this problem is defined as 
\begin{equation}
\label{minimax-risk-parameter}
\underline{\mathcal{R}}_\rho^\star ~:=~ \inf_{\hat{\theta}} \sup_{\theta \in \Theta} \Ee{\textsf{O}_1^n \sim P_{\theta}^n}{\Ee{\tilde{\theta} \sim \hat{\theta}\br{\textsf{O}_1^n}}{\rho\br{\theta,\tilde{\theta}}}} .
\end{equation}

\subsection{$f$-Divergences}
\label{subsec:f-divergences}
The hardness of the binary classification problem depends on the distinguishability of the two probability distributions associated with it. The class of $f$-divergences (\cite{ali1966general,csiszar1972class}) provide a rich set of relations that can be used to measure the separation of the distributions in a binary experiment.
\begin{definition} 
	\label{binary-fdiv-def}
	Let $f: \br{0,\infty} \rightarrow \mathbb{R}$ be a convex function with $f(1) = 0$. For all distributions $P, Q \in \mathcal{P}\br{\mathcal{O}}$ the $f$-divergence between $P$ and $Q$ is,
	\[
	\mathbb{I}_f \br{P,Q} = \Ee{Q}{f\br{\frac{dP}{dQ}}} = \int_{\mathcal{O}}^{} {f\br{\frac{dP}{dQ}} dQ}
	\]
	when $P$ is absolutely continuous with respect to $Q$ and equals $\infty$ otherwise.
\end{definition}
Many commonly used divergences in probability, mathematical statistics and information theory are special cases of $f$-divergences. For example: 
\begin{enumerate}
	\item The Kullback-Leibler divergence (with $f_{\mathrm{KL}} \br{u} = u \log{u}$)
	\[
	\mathbb{I}_{\mathrm{KL}}\br{P , Q} = D\br{P \mid\mid Q} = \Ee{Q}{\frac{dP}{dQ} \log{\frac{dP}{dQ}}}
	\] 
	\item The total variation distance (with $f_{\mathrm{TV}} \br{u} = \abs{u-1}$) 
	\[
	\mathbb{I}_{\mathrm{TV}}\br{P , Q} = d_{\mathrm{TV}}\br{P,Q} = \Ee{Q}{\abs{\frac{dP}{dQ} - 1}} . 
	\]
	Also for general measures $\mu$ and $\nu$ on $\mathcal{O}$, we define $d_{\mathrm{TV}}\br{\mu,\nu} = \int{\abs{d\mu - d\nu}}$. 
	\item The $\chi^2$-divergence (with $f_{\chi^2} \br{u} = (u-1)^2$) 
	\[
	\mathbb{I}_{\chi^2}\br{P , Q} = \chi^2 \br{P \mid\mid Q} = \Ee{Q}{\br{\frac{dP}{dQ} - 1}^2}
	\]
	\item The squared Hellinger distance (with $f_{\mathrm{He}^2} \br{u} = (\sqrt{u}-1)^2$)
	\[
	\mathbb{I}_{\mathrm{He}^2}\br{P , Q} = \mathrm{He}^2 \br{P , Q} = \Ee{Q}{\br{\sqrt{\frac{dP}{dQ}} - 1}^2}
	\]
\end{enumerate}

\paragraph{Integral Representations of $f$-divergences: }
Representation of $f$-divergences and loss functions as weighted average of \textit{primitive} components (in the sense that they can be used to express other measures but themselves cannot be so expressed) is very useful in studying certain geometric properties of them using the weight function behavior. The following restatement of a theorem by \cite{liese2006divergences} provides such a representation for any $f$-divergence (confer \cite{reid2011information} for a proof):
\begin{theorem}
	\label{fdiv-weight-theorem}
	Define $\bar{c} := 1-c$, for $c \in \bs{0,1}$, and let $f$ be convex such that $f(1) = 0$. Then the $f$-divergence between $P$ and $Q$ can be written in a weighted integral form as follows:
	\begin{equation}
	\label{weighted-int-rep-fdiv}
	\mathbb{I}_f \br{P,Q} = \int_{0}^{1}{\mathbb{I}_{f_c}\br{P,Q} \gamma_f\br{c} dc} , 
	\end{equation}
	where 
	\begin{equation}
	\label{primitive-fdiv-tent-func}
	f_c (t) = \bar{c} \wedge c - \bar{c} \wedge (c t), \, t \in \mathbb{R}_+
	\end{equation}
	and 
	\begin{equation}
	\label{fdiv-weight}
	\gamma_f\br{c} := \frac{1}{c^3} f'' \br{\frac{\bar{c}}{c}} .
	\end{equation}
\end{theorem}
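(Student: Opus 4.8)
The plan is to reduce the identity about divergences to an identity about the generating convex function $f$ itself, exploiting that $\mathbb{I}_{\cdot}\br{P,Q}$ is linear in its generator and that affine generators contribute nothing. Indeed, if $g\br{t}=a+b\br{t-1}$ is affine then, writing $t=\frac{dP}{dQ}$, we have $\mathbb{I}_g\br{P,Q}=a+b\int\br{\frac{dP}{dQ}-1}\,dQ=a$, because $\int\frac{dP}{dQ}\,dQ=\int dP=1=\int dQ$; in particular every affine $g$ with $g\br1=0$ gives $\mathbb{I}_g\br{P,Q}=0$. It therefore suffices to show that $f$ and $t\mapsto\int_0^1 f_c\br{t}\,\gamma_f\br{c}\,dc$ coincide up to an affine function of $t$, and then to interchange the order of integration.

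First I would record the shape of the primitive $f_c$. Since $\bar c\wedge\br{ct}=ct-c\br{t-\bar c/c}_+$, we get $f_c\br{t}=c\br{t-\bar c/c}_+-ct+\br{\bar c\wedge c}$, so that up to an affine term in $t$ the primitive is the single hinge $c\br{t-s}_+$ with kink at $s=\bar c/c$; equivalently $f_c''=c\,\delta_s$ in the distributional sense, and $f_c\br1=0$. Next I would invoke the standard integral representation of a convex function through its second-derivative measure (a second-order Taylor formula with integral remainder): for convex $f$ with $f\br1=0$,
\[ f\br{t}=f'_+\br1\br{t-1}+\int_{\br{0,\infty}}k\br{t,s}\,f''\br{ds},\qquad k\br{t,s}:=\br{t-s}_+-\br{1-s}_+-\ind{s<1}\br{t-1}, \]
where $k\br{\cdot,s}$ is the hinge $\br{t-s}_+$ corrected by its first-order Taylor polynomial at $t=1$, so that $k\br{1,s}=0$ and $\partial_t^2k\br{\cdot,s}=\delta_s$. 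Because $k\br{\cdot,s}$ differs from $\br{t-s}_+$ only by an affine function of $t$, and $\br{t-s}_+=\tfrac1c f_c\br{t}+\text{(affine in }t\text{)}$ with $s=\bar c/c$, the whole integrand equals $\tfrac1c f_c\br{t}$ modulo an affine function of $t$.

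The last step is the change of variables $s=\bar c/c$, i.e.\ $c=1/\br{1+s}$, which maps $s\in\br{0,\infty}$ bijectively onto $c\in\br{0,1}$ with $ds=-dc/c^2$. Assuming $f\in C^2$ (so $f''\br{ds}=f''\br{s}\,ds$), the weight becomes $f''\br{\bar c/c}\,c^{-2}\,dc=c\,\gamma_f\br{c}\,dc$, and the factor $\tfrac1c$ coming from $\br{t-s}_+=\tfrac1c f_c+\cdots$ cancels the leftover $c$, leaving precisely $\gamma_f\br{c}\,dc$. Collecting the pieces gives $f\br{t}=\int_0^1 f_c\br{t}\,\gamma_f\br{c}\,dc+\text{(affine in }t\text{)}$. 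Applying $\mathbb{I}_{\cdot}\br{P,Q}$ to both sides, using linearity, discarding the affine remainder by the observation of the first paragraph, and swapping the $c$-integral with the integral over $\mathcal{O}$ by Tonelli (the integrand can be taken nonnegative since $f_c\ge0$) yields \eqref{weighted-int-rep-fdiv}.

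The main obstacle I anticipate is making the convex-function representation and the interchange of integrals rigorous when $f$ is merely convex rather than $C^2$: then $f''$ is only a Radon measure on $\br{0,\infty}$, the formula for $\gamma_f$ must be read through that measure, and one must verify that the affine remainder is finite (so that it legitimately drops out under $\mathbb{I}$) and that Tonelli applies. The delicate bookkeeping is tracking exactly which terms are affine in $t$ and hence invisible to the divergence; since each such term is explicitly affine by construction, the cancellations are clean once the integrability is in place.
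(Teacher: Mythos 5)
Note first that the paper itself contains no proof of Theorem~\ref{fdiv-weight-theorem}: it is imported from \cite{liese2006divergences}, with \cite{reid2011information} cited for a proof. So your attempt can only be measured against that standard argument, whose ingredients (hinge-kernel representation of a convex function, the change of variables $s=\bar c/c$, an interchange of integrals) you have correctly identified --- but you have assembled them in an order that breaks.

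The gap is your central intermediate claim that $f\br{t}$ and $t\mapsto\int_0^1 f_c\br{t}\gamma_f\br{c}\,dc$ coincide up to an affine function of $t$; this is false in general, and you flag it only as a bookkeeping issue to be verified, when in fact the verification fails. Writing $c=1/\br{1+s}$, the exact pointwise identity is $k\br{t,s}=\tfrac1c f_c\br{t}+\ind{s\geq 1}\br{t-1}$, so splitting $\int k\br{t,s}f''\br{ds}$ into an $f_c$-part plus an affine part is legitimate only if both pieces are separately finite; but the affine piece has coefficient $f''\br{[1,\infty)}$, which is infinite for, e.g., the KL generator $f\br{u}=u\log u$ (there $f''\br{u}=1/u$). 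Correspondingly, for KL one has $\gamma_f\br{c}=1/\br{c^2\bar c}$ and $f_c\br{t}=-c\br{t-1}$ for small $c$, so $\int_0^1 f_c\br{t}\gamma_f\br{c}\,dc=-\infty$ for every $t>1$: the two functions you want to equate modulo an affine term do not both exist, and your split is an $\infty-\infty$. Relatedly, your justification of Tonelli is wrong: $f_c$ is \emph{not} pointwise nonnegative (indeed $f_c\br{t}\to c\wedge\bar c-\bar c<0$ as $t\to\infty$ when $c<1/2$); only the divergence $\mathbb{I}_{f_c}$ is nonnegative. The fix, which is how the cited proof proceeds, is to reverse the order of operations: integrate against $Q$ \emph{first}. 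Since $k\geq0$ and $f''\geq0$, Tonelli applies directly to $\int_{\mathcal{O}}\int_{\br{0,\infty}}k\br{dP/dQ,s}\,f''\br{ds}\,dQ$, giving $\mathbb{I}_f\br{P,Q}=\int_{\br{0,\infty}}\bs{\int_{\mathcal{O}}k\br{dP/dQ,s}\,dQ}\,f''\br{ds}$ (the first-order Taylor term vanishes because $\int\br{dP/dQ-1}\,dQ=0$). Then, for each \emph{fixed} $s$, the inner integral equals $\tfrac1c\,\mathbb{I}_{f_c}\br{P,Q}$: here the affine term $\ind{s\geq 1}\br{t-1}$ is annihilated by $\int\br{dP/dQ-1}\,dQ=0$ before any integration over $s$ occurs, and the split is valid because $f_c$ is bounded, so both pieces are finite. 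Your change of variables then yields \eqref{weighted-int-rep-fdiv}. In short: all the right pieces are in your proposal, but the affine cancellation must happen at the level of divergences at fixed $s$, not at the level of generators integrated over $c$.
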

For $c \in \bs{0,1}$, the term $\mathbb{I}_{f_c} (P,Q)$ in \eqref{weighted-int-rep-fdiv} is called the \textit{$c$-primitive $f$-divergence} and can be written as 
\begin{align}
\mathbb{I}_{f_c} (P,Q) ~=~& \int{\bc{\bar{c} \wedge c - \bar{c} \wedge \br{c \frac{dP}{dQ}}} dQ} \label{eq-primitive-f-div-1}\\
~=~& \bar{c} \wedge c - \int{\bar{c} dQ \wedge c dP} \label{eq-primitive-f-div-2} \\
~=~& \bar{c} \wedge c - \frac{1}{2} + \frac{1}{2} \int{\abs{c dP - \bar{c} dQ}} \label{eq-primitive-f-div-3} \\
~=~& \frac{1}{2} d_{\mathrm{TV}} \br{c P , \bar{c} Q} - \frac{1}{2} \abs{1-2 c} , \label{eq-primitive-f-div-4}
\end{align}
where the first equality \eqref{eq-primitive-f-div-1} is due to the definition of $f$-divergence and \eqref{primitive-fdiv-tent-func}, and the third equality \eqref{eq-primitive-f-div-3} is due to the following observation:
\begin{align*}
\int{\abs{p-q}} ~=~& \int_{q \geq p}{q-p} + \int_{q < p}{p-q} \\
~=~& \int_{q \geq p}{q} + \int_{q < p}{p} - \int{p \wedge q} \\
~=~& 1 - \int_{q < p}{q} + 1 - \int_{q \geq p}{p} - \int{p \wedge q} \\
~=~& 2 - 2 \int{p \wedge q} . \\
\end{align*}

\paragraph{Comparison between $f$-Divergences: }
Consider the problem of maximizing or minimizing an $f$-divergence between two probability measures subject to a constraint on another $f$-divergence. This problem is captured by the following definition: 
\begin{definition}[Joint Range]
	Consider two $f$-divergences $F_{P,Q} := \mathbb{I}_f \br{P,Q}$ and $G_{P,Q} := \mathbb{I}_g \br{P,Q}$. Their joint
	range is a subset of $\mathbb{R}^2$ defined by
	\begin{align*}
	J ~:=~& \bc{\br{F_{P,Q}, G_{P,Q}}: P,Q \in \mathcal{P}\br{\mathcal{O}}, \mathcal{O} \text{ is measurable}} , \\
	{J}_k ~:=~& \bc{\br{F_{P,Q}, G_{P,Q}}: P,Q \in \mathcal{P}\br{[k]}}, \, k \in \mathbb{N} .
	\end{align*}
\end{definition}
The region ${J}$ seems difficult to characterize since we need to consider $P,Q$ over all measurable spaces; on the other hand, the region ${J}_k$ for small $k$ is easy to obtain. The following theorem relates these two regions (${J}$ and ${J}_k$). 
\begin{theorem}[\cite{harremoes2011pairs}]
	\label{joint-range-theo}
	${J} = \text{conv}\br{{J}_2} .$
\end{theorem}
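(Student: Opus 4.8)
The plan is to reduce both $f$-divergences to expectations of a single random variable, the likelihood ratio, and then to read off the theorem as a statement about a moment problem over the planar curve $\bc{\br{f(t),g(t)} : t \geq 0}$. First I would use the fact that an $f$-divergence depends only on the law of the likelihood ratio: for any $P,Q$ with $Z := dP/dQ$ one has $\mathbb{I}_f\br{P,Q} = \Ee{Z \sim \nu}{f\br{Z}}$ and $\mathbb{I}_g\br{P,Q} = \Ee{Z \sim \nu}{g\br{Z}}$, where $\nu$ is the law of $Z$ under $Q$. This $\nu$ is an arbitrary distribution on $\bs{0,\infty}$ subject to the single constraint $\Ee{Z\sim\nu}{Z} = \int dP = 1$, and conversely every such mean-one $\nu$ is realized by a genuine pair (take $Q = \nu$ and $dP/dQ\br{t} = t$). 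Hence $J$ is exactly the set of points $\br{\Ee{\nu}{f},\Ee{\nu}{g}}$ as $\nu$ ranges over mean-one distributions. A direct computation gives the analogue for $J_2$: a two-point pair $P = \br{a,\bar{a}}$, $Q = \br{b,\bar{b}}$ induces the two-atom law $\nu = b\,\delta_{a/b} + \bar{b}\,\delta_{\bar{a}/\bar{b}}$, whose mean is automatically $a + \bar{a} = 1$, and every two-atom mean-one distribution arises this way; so $J_2$ is the same set of points restricted to $\nu$ supported on at most two atoms.

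With this reformulation the inclusion $\text{conv}\br{J_2} \subseteq J$ is immediate. Since a mixture $\bar\nu = \sum_i \lambda_i \nu_i$ of mean-one distributions is again mean-one, a convex combination $\sum_i \lambda_i \br{\Ee{\nu_i}{f},\Ee{\nu_i}{g}}$ of points of $J$ equals $\br{\Ee{\bar\nu}{f},\Ee{\bar\nu}{g}} \in J$, so $J$ is convex; as $J_2 \subseteq J$ trivially, we get $\text{conv}\br{J_2} \subseteq \text{conv}\br{J} = J$. (Equivalently, the disjoint-union-and-scaling of two experiments realizes any convex combination of their divergence pairs.)

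For the reverse inclusion $J \subseteq \text{conv}\br{J_2}$ — the real content — I would fix a mean-one $\nu$ and lift its image into $\mathbb{R}^3$ by recording the constrained moment: the point $\br{\Ee{\nu}{f},\Ee{\nu}{g},1} = \br{\Ee{\nu}{f},\Ee{\nu}{g},\Ee{\nu}{Z}}$ lies in the convex hull of the space curve $\bc{\br{f(t),g(t),t} : t \geq 0}$. By Carath\'eodory's theorem this point is a convex combination of finitely many curve points, i.e.\ it is reproduced, with identical $f$- and $g$-expectations, by some finitely supported mean-one distribution $\tilde\nu$. It then remains to split any finitely supported mean-one distribution into two-atom mean-one pieces, which I would do by peeling: pair an atom below $1$ with an atom above $1$ (an atom exactly at $1$ is already a degenerate piece), form the unique two-atom mean-one distribution $\sigma$ supported on them, and subtract the largest multiple $\lambda\sigma$ that keeps all weights non-negative. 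The renormalized remainder again has mean one, since both its total mass and its first moment drop by exactly $\lambda$, but it has strictly fewer atoms, so the recursion terminates. This yields $\tilde\nu = \sum_j \lambda_j \sigma_j$ as measures, whence $\br{\Ee{\nu}{f},\Ee{\nu}{g}} = \sum_j \lambda_j \br{\Ee{\sigma_j}{f},\Ee{\sigma_j}{g}} \in \text{conv}\br{J_2}$.

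The main obstacle I anticipate is not this combinatorial core but the measure-theoretic boundary bookkeeping in the first step. For standard choices such as $f_{\mathrm{KL}}$ or $f_{\chi^2}$ the integrand blows up, $f(0^+) = +\infty$, and the non-absolutely-continuous case $P \not\ll Q$ enters through the recession term $f'(\infty)$, effectively adding a point ``at $t = \infty$'' to the curve. One must therefore work on $\bs{0,\infty}$ with values in $\br{-\infty,+\infty}^2$, pass to the \emph{closed} convex hull, and invoke lower semicontinuity to ensure the extreme and limit points are still attained by genuine binary experiments. Verifying that the realization $Q = \nu$, $dP/dQ = t$ remains legitimate in these degenerate regimes, and that the Carath\'eodory and peeling steps survive infinite coordinates, is where the care lies; the everywhere-finite case is exactly the clean argument sketched above.
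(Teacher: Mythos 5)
First, a caveat about the comparison itself: the paper contains no proof of Theorem~\ref{joint-range-theo} --- it is imported verbatim from \cite{harremoes2011pairs} --- so your attempt can only be judged against the literature, not against an internal argument. On those terms, your route is essentially the standard one (it is the argument of \cite{harremoes2011pairs}, also presented in \cite{yurilecnotes}): reduce both divergences to the pair $\br{\Ee{\nu}{f},\Ee{\nu}{g}}$ where $\nu$ is the law of the likelihood ratio, characterize the attainable $\nu$ as exactly the mean-one laws on $[0,\infty)$, obtain $\mathrm{conv}\br{J_2}\subseteq J$ from convexity of that family, and obtain $J\subseteq\mathrm{conv}\br{J_2}$ by lifting to $\br{\Ee{\nu}{f},\Ee{\nu}{g},1}$, extracting finitely many curve points, and splitting a finitely supported mean-one law into two-atom mean-one laws. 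Your peeling step is correct: total mass and first moment both drop by exactly $\lambda$, so the renormalized remainder stays mean-one, and (since the mean is $1$) a remaining atom strictly below $1$ forces one strictly above $1$, so the recursion terminates.

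Two points deserve flagging. The step ``the lifted point lies in the convex hull of the space curve'' is not Carath\'eodory's theorem: Carath\'eodory only bounds the number of points \emph{once membership in the plain (not closed) hull is known}. What you actually need is the finite-dimensional fact that the barycenter of a probability measure on $\mathbb{R}^3$ lies in the convex hull of its support, proved by a supporting-hyperplane argument plus induction on dimension; without it you land only in the \emph{closed} convex hull, which is strictly weaker than the claimed $J=\mathrm{conv}\br{J_2}$. This is the one genuine gap, though the missing lemma is standard and citable. Second, the degenerate-case worry you raise is real but, for the statement as it appears here, largely avoidable: the paper's joint-range definition takes $J\subseteq\mathbb{R}^2$, so on the natural reading pairs with an infinite divergence are excluded; then both expectations under your $\nu$ are finite, every atom produced by the decomposition is a genuine finite curve point (an atom at $t=0$ with $f(0^+)=\infty$ cannot occur, since it would force $\Ee{\nu}{f}=\infty$), and each two-atom piece is realized by a binary pair with finite divergences. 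The extended-real bookkeeping ($f(0^+)=\infty$, $P\not\ll Q$, values in $\bs{0,\infty}^2$) is mandatory only for the stronger form proved in \cite{harremoes2011pairs}. In short: correct core and the same approach as the cited source, with one implicit lemma to supply.
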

By Theorem~\ref{joint-range-theo}, the region $J$ is no more than the convex hull of ${J}_2$. In certain cases, it is easy to obtain a parametric formula of ${J}_2$. In those cases, we can systematically prove several important inequalities between two $f$-divergences via their joint range. For example using the joint range between the total variation and Hellinger divergence, it can be shown that (\cite{Tsybakov2009nonparbook,yurilecnotes}):
\begin{equation}
\label{joint-range-hell-var}
d_\mathrm{TV}\br{P,Q} ~\leq~ \mathrm{He}\br{P,Q} \sqrt{1 - \frac{\mathrm{He}^2\br{P,Q}}{4}} .
\end{equation}
We extend the above result to the $c$-primitive $f$-divergence as follows: 
\begin{equation}
\label{joint-range-hell-primc}
\mathbb{I}_{f_c}\br{P,Q} ~\leq~ \br{c \wedge \bar{c}} \cdot \mathrm{He}\br{P,Q} \sqrt{1 - \frac{\mathrm{He}^2\br{P,Q}}{4}} .
\end{equation}
We use a mathematical software to plot (see Figure~\ref{joint-hell-fc}) the joint range between the $c$-primitive $f$-divergence and the Hellinger divergence which is given by the convex hull of
\[
J_2 := \bc{\br{F_{p,q},G_{p,q}} : p,q \in \bs{0,1}} ,
\]
where $F_{p,q} = 2 \br{1-\sqrt{p q} -\sqrt{\bar{p}\bar{q}}}$, and $G_{p,q} = \frac{1}{2}\br{\abs{c p - \bar{c} q} + \abs{c \bar{p} - \bar{c} \bar{q}} - \abs{2 c - 1}}$. Then using this joint range, we verify that the bound given in \eqref{joint-range-hell-primc} is indeed true. 

\begin{figure}
	\centering
	\includegraphics[height=0.8\linewidth]{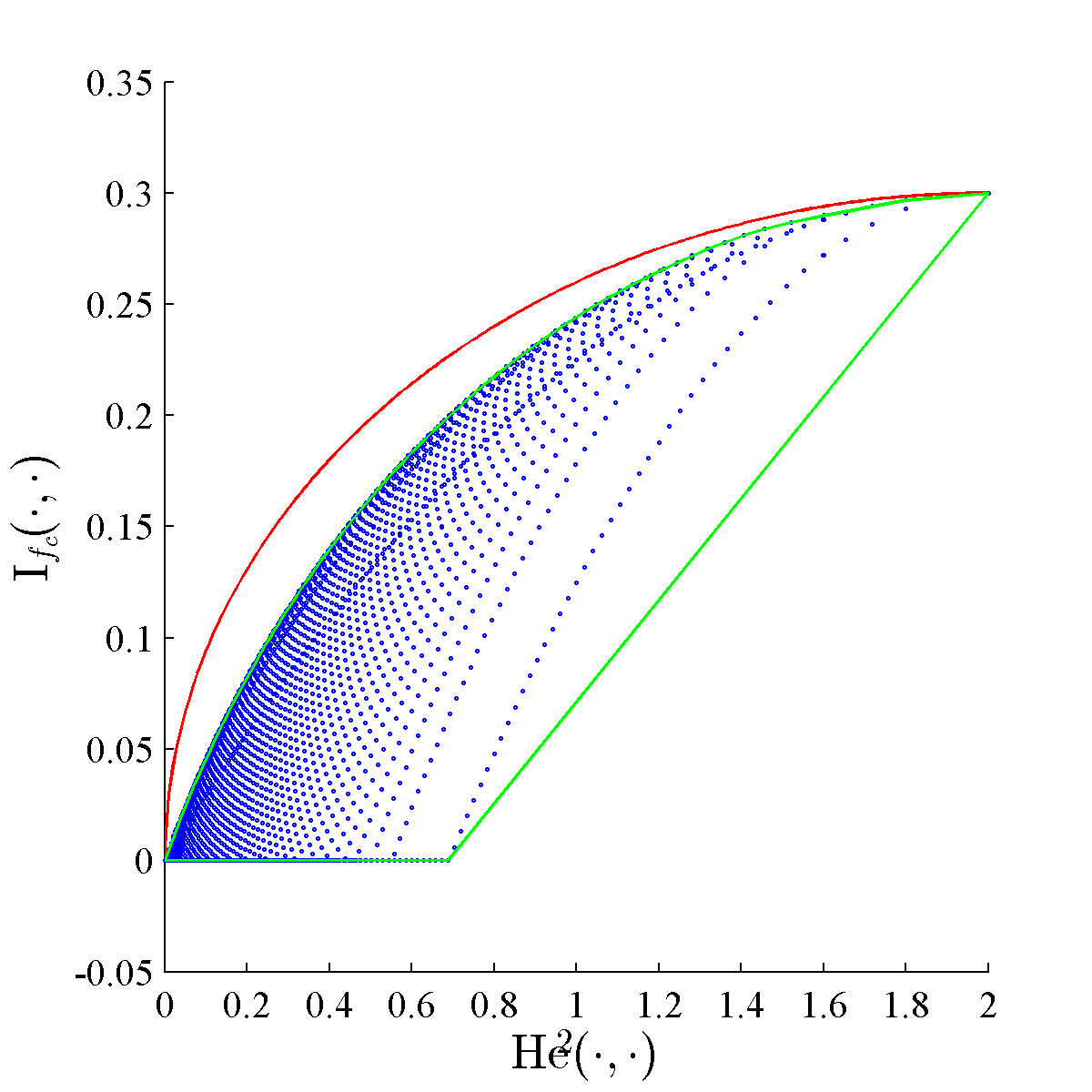}
	\caption[Joint range of Hellinger distance and a $c$-primitive $f$-divergence]{Joint range ($J_2$) of Hellinger distance and a $c$-primitive $f$-divergence (c=0.7) (\textcolor{blue}{$\cdots$}), convex hull of $J_2$ (\textcolor{green}{---}), and a parametric curve $\bc{\br{\mathrm{He}^2 \br{P,Q},\br{c \wedge \bar{c}} \cdot \mathrm{He} \br{P,Q} \sqrt{1 - \frac{\mathrm{He}^2 \br{P,Q}}{4}}}: \mathrm{He}^2 \br{P,Q} \in \bs{0,2}}$ (\textcolor{red}{---}). \label{joint-hell-fc}}
\end{figure}

\paragraph{Sub-additive $f$-Divergences: }
Some $f$-divergences satisfy the sub-additivity property, which will be useful in analyzing the hardness of learning problems with repeated experiments (samples). The following lemma shows that both total variation and squared Hellinger divergences satisfy this property.  
\begin{lemma}
	\label{sub-add-f-div}
	For all collections of distributions $P_i , Q_i \in \mathcal{P} \br{\mathcal{O}_i}$, $i \in [k]$
	\[
	d_{\mathrm{TV}} \br{\bigotimes_{i=1}^k P_i , \bigotimes_{i=1}^k Q_i} ~\leq~ \sum_{i=1}^{k}{d_{\mathrm{TV}} \br{P_i ,Q_i}} ,
	\]	
	and 
	\[
	\mathrm{He}^2 \br{\bigotimes_{i=1}^k P_i , \bigotimes_{i=1}^k Q_i} ~\leq~ \sum_{i=1}^{k}{\mathrm{He}^2 \br{P_i ,Q_i}} .
	\]
\end{lemma}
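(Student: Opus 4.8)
The plan is to prove each inequality separately, handling the two divergences by different mechanisms that reflect their different tensorization behaviour. By an easy induction it suffices in both cases to understand how a single factor interacts with the rest, so I would organize each argument around the common-factor behaviour of the divergence in question.

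For total variation, the key facts I would exploit are that $d_{\mathrm{TV}}$ (in the form $d_{\mathrm{TV}}\br{\mu,\nu} = \int \abs{d\mu - d\nu}$) is a genuine metric on probability measures, hence obeys the triangle inequality, and that tensoring with a common probability measure leaves it unchanged: since $\int d\lambda = 1$, a direct computation gives $d_{\mathrm{TV}}\br{\mu \otimes \lambda, \nu \otimes \lambda} = \int \abs{d\mu - d\nu} \, d\lambda = d_{\mathrm{TV}}\br{\mu,\nu}$. With these two facts in hand I would run a hybrid (telescoping) argument. Define the interpolating measures
\[
H_j := \br{\bigotimes_{i=1}^{j} Q_i} \otimes \br{\bigotimes_{i=j+1}^{k} P_i}, \qquad j = 0, 1, \dots, k,
\]
so that $H_0 = \bigotimes_i P_i$ and $H_k = \bigotimes_i Q_i$. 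Applying the triangle inequality along this chain and noting that $H_{j-1}$ and $H_j$ differ only in their $j$-th coordinate (all other factors being shared), the common-factor invariance collapses each term to $d_{\mathrm{TV}}\br{H_{j-1}, H_j} = d_{\mathrm{TV}}\br{P_j, Q_j}$, and summing over $j$ gives the claim.

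For the squared Hellinger distance I would instead use its multiplicative structure. Writing the Hellinger affinity $\rho\br{P,Q} := \int \sqrt{dP \, dQ}$, the definition yields the identity $\mathrm{He}^2\br{P,Q} = 2\br{1 - \rho\br{P,Q}}$. The crucial observation is that the affinity factorizes across independent coordinates, $\rho\br{\bigotimes_i P_i, \bigotimes_i Q_i} = \prod_i \rho\br{P_i, Q_i}$, because $\sqrt{\prod_i dP_i \cdot \prod_i dQ_i} = \prod_i \sqrt{dP_i \, dQ_i}$ and the resulting product integral separates by Tonelli. Setting $a_i := \mathrm{He}^2\br{P_i,Q_i}/2 \in \bs{0,1}$, the identity and the factorization turn the left-hand side into $2\br{1 - \prod_i \br{1 - a_i}}$, so the desired bound reduces to the elementary Weierstrass product inequality $\prod_{i=1}^{k} \br{1 - a_i} \geq 1 - \sum_{i=1}^{k} a_i$ for $a_i \in \bs{0,1}$, which I would verify by a one-line induction on $k$.

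Neither step presents a serious obstacle; the only real content is choosing the right invariant for each divergence. The main thing to get right is that total variation does \emph{not} tensorize multiplicatively, so the naive product approach fails and one must go through the metric/triangle-inequality route, whereas the Hellinger affinity \emph{does} tensorize, and there the only subtlety is converting the resulting multiplicative inequality into an additive one via the product inequality.
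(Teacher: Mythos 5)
Your proposal is correct and takes essentially the same route as the paper: for total variation, the paper likewise combines the triangle inequality with invariance of $d_{\mathrm{TV}}$ under tensoring with a common factor (your telescoping hybrid chain is just the unrolled form of its induction), and for squared Hellinger, the paper likewise factorizes the affinity $\int\sqrt{pq}\,d\mu$ via Tonelli and then discards the cross term, which iterated pairwise is exactly your Weierstrass product inequality.
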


	\section{Hardness of the Cost-sensitive Classification Problem}
\label{sec:hard-cost-sense}
In this section we follow the presentation of \cite{raginskylecnotes}. Before studying the hardness of the cost-sensitive classification, we study the hardness of the auxiliary problem of parameter estimation.

\subsection{Minimax Lower Bounds for Parameter Estimation Problem}
\label{subsec:minmax-paramter-estimation}
We derive the cost-dependent lower bound for $\underline{\mathcal{R}}_\rho^\star$ (defined in \eqref{minimax-risk-parameter}) by extending the standard Le Cam and Assouad's techniques. We start with the two point method introduced by Lucien Le Cam for obtaining minimax lower bounds. 
\begin{proposition}
	\label{lecam-method}
	For any $c \in \br{0,1}$, the minimax risk $\underline{\mathcal{R}}_\rho^\star$ (given by \eqref{minimax-risk-parameter}) of the parameter estimation problem with (pseudo metric) loss function $\rho:\Theta \times \Theta \rightarrow \mathbb{R}$ is bounded from below as follows:
	\[
	\underline{\mathcal{R}}_\rho^\star ~\geq~ \sup_{\theta \neq \theta'}{\bc{\rho\br{\theta,\theta'} \cdot \br{c \wedge \bar{c} - \mathbb{I}_{f_c}\br{P_{\theta}^n,P_{\theta'}^n}}}} ,
	\]
	where $f_c$ is given by \eqref{primitive-fdiv-tent-func}.
\end{proposition}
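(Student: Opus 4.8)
The plan is to run Le Cam's two-point method, but to weight the two competing risks by $c$ and $\bar c$ rather than by the usual $1/2$ and $1/2$; this asymmetric weighting is exactly what converts the total variation distance of the classical bound into the $c$-primitive divergence $\mathbb{I}_{f_c}$. Writing $\mathcal{R}(\vartheta,\hat\theta) := \mathbb{E}_{\textsf{O}_1^n \sim P_\vartheta^n}[\mathbb{E}_{\tilde\theta \sim \hat\theta(\textsf{O}_1^n)}[\rho(\vartheta,\tilde\theta)]]$ for the inner risk appearing in \eqref{minimax-risk-parameter}, I would first restrict the supremum over $\Theta$ to a single pair $\theta \neq \theta'$ and use that a maximum dominates any convex combination: for every (stochastic) estimator $\hat\theta$,
\[
\sup_{\vartheta \in \Theta} \mathcal{R}(\vartheta,\hat\theta) \;\geq\; \max\{\mathcal{R}(\theta,\hat\theta),\, \mathcal{R}(\theta',\hat\theta)\} \;\geq\; c\,\mathcal{R}(\theta,\hat\theta) + \bar c\,\mathcal{R}(\theta',\hat\theta),
\]
which is legitimate since $c,\bar c \geq 0$ and $c + \bar c = 1$.

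The heart of the argument is a pointwise lower bound on the weighted integrand. Let $p := dP_\theta^n$ and $q := dP_{\theta'}^n$ denote densities with respect to a common dominating measure, and fix an observation $o$ together with a candidate value $\tilde\theta \in \mathcal{A} = \Theta$. Since $\rho$ satisfies the triangle inequality, $\rho(\theta,\tilde\theta) + \rho(\theta',\tilde\theta) \geq \rho(\theta,\theta')$, and therefore
\[
c\,\rho(\theta,\tilde\theta)\,p(o) + \bar c\,\rho(\theta',\tilde\theta)\,q(o) \;\geq\; \big(c\,p(o) \wedge \bar c\,q(o)\big)\big(\rho(\theta,\tilde\theta) + \rho(\theta',\tilde\theta)\big) \;\geq\; \rho(\theta,\theta')\,\big(c\,p(o) \wedge \bar c\,q(o)\big),
\]
the first inequality holding termwise because $c\,p \geq c\,p \wedge \bar c\,q$, $\bar c\,q \geq c\,p \wedge \bar c\,q$, and both distances are nonnegative. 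Averaging over $\tilde\theta \sim \hat\theta(o)$ leaves the right-hand side unchanged (it does not depend on $\tilde\theta$, and $\hat\theta(o)$ is a probability measure), and integrating over $o$ then gives
\[
c\,\mathcal{R}(\theta,\hat\theta) + \bar c\,\mathcal{R}(\theta',\hat\theta) \;\geq\; \rho(\theta,\theta') \int c\,dP_\theta^n \wedge \bar c\,dP_{\theta'}^n.
\]

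Finally I would recognize the integral as a $c$-primitive divergence: the representation \eqref{eq-primitive-f-div-2} gives $\int c\,dP_\theta^n \wedge \bar c\,dP_{\theta'}^n = c \wedge \bar c - \mathbb{I}_{f_c}(P_\theta^n, P_{\theta'}^n)$. Chaining the three displays, then taking the infimum over $\hat\theta$ and the supremum over $\theta \neq \theta'$, yields the stated bound. There is no serious obstacle once the $(c,\bar c)$-weighting is in place; the only care needed is to justify the interchange of integration order and the averaging step for genuinely stochastic decision rules, which is immediate here because the pointwise bound $\rho(\theta,\theta')\,(c\,p \wedge \bar c\,q)$ is independent of the drawn value $\tilde\theta$. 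Conceptually, the key realization is that weighting the two-point risks by $(c,\bar c)$ is dual to the $(c,\bar c)$-weighting already encoded in $\mathbb{I}_{f_c}$ through its total-variation-type formula \eqref{eq-primitive-f-div-4}.
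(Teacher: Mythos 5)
Your proposal is correct and follows essentially the same route as the paper's proof: the $(c,\bar c)$-weighted two-point bound, the pointwise inequality $c\,\rho(\theta,\tilde\theta)\,dP_\theta^n + \bar c\,\rho(\theta',\tilde\theta)\,dP_{\theta'}^n \geq \rho(\theta,\theta')\,(c\,dP_\theta^n \wedge \bar c\,dP_{\theta'}^n)$ via the triangle inequality, and the identification of $\int c\,dP_\theta^n \wedge \bar c\,dP_{\theta'}^n$ with $c \wedge \bar c - \mathbb{I}_{f_c}(P_\theta^n,P_{\theta'}^n)$ through \eqref{eq-primitive-f-div-2}. The only (cosmetic) difference is that you obtain the pointwise bound termwise from $c\,p \geq c\,p \wedge \bar c\,q$ and $\bar c\,q \geq c\,p \wedge \bar c\,q$, where the paper argues by cases on which of the two weighted densities is larger.
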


By setting $c = \frac{1}{2}$ in Proposition~\ref{lecam-method}, we recover Le Cam's (\cite{le2012asymptotic}) minimax lower bound for parameter estimation problem: 
\[
\underline{\mathcal{R}}_\rho^\star ~\geq~ \frac{1}{2} \sup_{\theta \neq \theta'}{\bc{\rho\br{\theta,\theta'} \cdot \br{1 - \frac{1}{2} d_{\mathrm{TV}}\br{P_{\theta}^n,P_{\theta'}^n}}}} ,
\]
since $\mathbb{I}_{f_{\frac{1}{2}}} \br{P,Q} = \frac{1}{4} d_{\mathrm{TV}}\br{P,Q}$ (from \eqref{eq-primitive-f-div-3} with $c=\frac{1}{2}$). Now we provide an auxiliary result which will be useful in deriving the cost-dependent minimax lower bounds via Assouad's lemma (\cite{assouad1983deux}). 
\begin{corollary}
	\label{auxi-minimax-coro}
	Let $\pi$ be any prior distribution on $\Theta$, and let $\mu$ be any joint probability distribution of a random pair $(\theta,\theta')\in\Theta \times \Theta$, such that the marginal distributions of both $\theta$ and $\theta'$ are equal to $\pi$. Then for any $c \in \br{0,1}$, the minimax risk $\underline{\mathcal{R}}_\rho^\star$ (given by \eqref{minimax-risk-parameter}) of the parameter estimation problem is bounded from below as follows:
	\[
	\underline{\mathcal{R}}_\rho^\star ~\geq~ \Ee{(\theta,\theta') \sim \mu}{\rho\br{\theta,\theta'} \cdot \br{c \wedge \bar{c} - \mathbb{I}_{f_c}\br{P_{\theta}^n,P_{\theta'}^n}}} .
	\]
\end{corollary}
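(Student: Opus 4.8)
The plan is to read the statement as a genuine corollary and deduce it from Proposition~\ref{lecam-method} with almost no extra work, by replacing the supremum over pairs by an average and invoking the elementary fact that an expectation never exceeds a supremum. Write
\[
g\br{\theta,\theta'} ~:=~ \rho\br{\theta,\theta'} \cdot \br{c \wedge \bar c - \mathbb{I}_{f_c}\br{P_\theta^n,P_{\theta'}^n}}
\]
for the integrand appearing on both sides, so that Proposition~\ref{lecam-method} reads $\underline{\mathcal{R}}_\rho^\star \geq \sup_{\theta\neq\theta'} g\br{\theta,\theta'}$ and the corollary asks for $\underline{\mathcal{R}}_\rho^\star \geq \Ee{(\theta,\theta')\sim\mu}{g\br{\theta,\theta'}}$.

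First I would record that $g$ is nonnegative on all of $\Theta\times\Theta$. Indeed, the integral representation \eqref{eq-primitive-f-div-2} gives $c\wedge\bar c - \mathbb{I}_{f_c}\br{P,Q} = \int \br{c\,dP}\wedge\br{\bar c\,dQ} \geq 0$, the integral of a pointwise minimum of two nonnegative measures, and $\rho\geq0$. Second, since $\rho$ is a pseudometric we have $\rho\br{\theta,\theta}=0$, so $g$ vanishes on the diagonal; combined with $g\geq0$ this gives $\max_{(\theta,\theta')\in\Theta\times\Theta} g = \max_{\theta\neq\theta'} g$, the maximum being attained because $\Theta$ is finite. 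Since $\mu$ is supported on $\Theta\times\Theta$, the chain
\[
\Ee{(\theta,\theta')\sim\mu}{g\br{\theta,\theta'}} ~\leq~ \max_{(\theta,\theta')\in\Theta\times\Theta} g\br{\theta,\theta'} ~=~ \max_{\theta\neq\theta'} g\br{\theta,\theta'} ~\leq~ \underline{\mathcal{R}}_\rho^\star
\]
closes the argument, the last inequality being exactly Proposition~\ref{lecam-method}.

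What I expect to be the main subtlety is not a computational obstacle but a conceptual one: this direct route never uses the hypothesis that both marginals of $\mu$ equal $\pi$. That hypothesis is harmless (the bound only becomes useful when $\mu$ is a well-chosen coupling), but its presence signals an alternative, more structural derivation that I would keep in reserve. In that alternative one re-uses the \emph{pointwise} two-point inequality underpinning Proposition~\ref{lecam-method}, namely $c\,R_\rho\br{\theta,\hat\theta} + \bar c\,R_\rho\br{\theta',\hat\theta} \geq g\br{\theta,\theta'}$ for every estimator $\hat\theta$, where $R_\rho\br{\theta,\hat\theta}$ is the risk appearing inside \eqref{minimax-risk-parameter} (this inequality follows from the triangle inequality for $\rho$, the pointwise bound $\br{c\,dP_\theta^n}\wedge\br{\bar c\,dP_{\theta'}^n}$, and \eqref{eq-primitive-f-div-2}). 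Averaging it over $(\theta,\theta')\sim\mu$ and using the common marginal $\pi$ collapses the left side to $\Ee{\theta\sim\pi}{R_\rho\br{\theta,\hat\theta}}$, which is dominated by $\sup_\theta R_\rho\br{\theta,\hat\theta}$, so that taking $\inf_{\hat\theta}$ recovers the same bound. Either way the work is minimal; the only thing to get right is the orientation of the weights $c,\bar c$ so that the minimum $\br{c\,dP_\theta^n}\wedge\br{\bar c\,dP_{\theta'}^n}$ matches the representation \eqref{eq-primitive-f-div-2} of $\mathbb{I}_{f_c}\br{P_\theta^n,P_{\theta'}^n}$.
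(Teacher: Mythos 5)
Your primary argument is correct, and it is genuinely different from --- in fact more elementary than --- the paper's proof. You observe that by \eqref{eq-primitive-f-div-2} the factor $c\wedge\bar c-\mathbb{I}_{f_c}\br{P_\theta^n,P_{\theta'}^n}=\int c\,dP_\theta^n\wedge\bar c\,dP_{\theta'}^n$ is nonnegative, that $\rho\geq 0$ and $\rho\br{\theta,\theta}=0$ for a pseudometric, hence your $g$ is nonnegative and vanishes on the diagonal; therefore $\Ee{(\theta,\theta')\sim\mu}{g\br{\theta,\theta'}}\leq\max_{\theta\neq\theta'}g\br{\theta,\theta'}\leq\underline{\mathcal{R}}_\rho^\star$ by Proposition~\ref{lecam-method} (the only degenerate case, $\abs{\Theta}=1$, being trivial). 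This proves the literal statement and correctly exposes that the common-marginal hypothesis is superfluous for it. The paper's proof is instead exactly your ``reserve'' argument: average the two-point inequality \eqref{sum-exp-link} over $(\theta,\theta')\sim\mu$, use the common marginal $\pi$ to collapse $c\,\Ee{\theta\sim\pi}{\cdot}+\bar c\,\Ee{\theta'\sim\pi}{\cdot}$ into a single $\pi$-average, and finish via minimax $\geq$ Bayes. The difference is not cosmetic: the averaging route proves the strictly stronger Bayes-risk bound $\inf_{\hat\theta}\Ee{\theta\sim\pi}{\Ee{\textsf{O}_1^n\sim P_\theta^n}{\Ee{\tilde\theta\sim\hat\theta\br{\textsf{O}_1^n}}{\rho\br{\theta,\tilde\theta}}}}\geq\Ee{(\theta,\theta')\sim\mu}{g\br{\theta,\theta'}}$, and it is this form --- not the stated minimax form --- that step $(iii)$ in the proof of Theorem~\ref{assouad-method} invokes coordinate-wise (with $\rho_i$ and $\mu_i$). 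Your sup-based derivation cannot deliver that step, since a lower bound on $\inf_{\hat\theta}\sup_{\theta}$ implies nothing about $\inf_{\hat\theta}$ of a $\pi$-average. So your proof is valid for the corollary as stated, but if it replaced the paper's, the downstream Assouad argument would lose its justification; the argument you kept in reserve is the one that does the real work.
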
 

Using the above corollary and extending the standard Assouad's lemma, we derive the cost-dependent minimax lower bound for the parameter estimation problem. 
\begin{theorem}
	\label{assouad-method}
	Let $d \in \mathbb{N}$, $\Theta=\bc{-1,1}^d$ and $\rho = \rho_{\mathrm{Ha}}$, where the Hamming distance $\rho_{\mathrm{Ha}}$ is given by \eqref{eq-hamming-distance}. Then for any $c \in \br{0,1}$, the minimax risk of the parameter estimation problem satisfies
	\[
	\underline{\mathcal{R}}_{\rho_{\mathrm{Ha}}}^\star ~\geq~ d \br{c \wedge \bar{c} - \underset{\theta,\theta': \rho_{\mathrm{Ha}}\br{\theta,\theta'}=1}{\max} \mathbb{I}_{f_c}\br{P_{\theta}^n,P_{\theta'}^n}} .
	\]
\end{theorem}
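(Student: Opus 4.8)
The plan is to extend the classical Assouad argument, the one new ingredient being that the cost parameter $c$ is injected through an \emph{asymmetric} prior rather than the usual uniform one. Write $\mathcal R\br{\theta,\hat\theta} := \Ee{\textsf{O}_1^n\sim P_\theta^n}{\Ee{\tilde\theta\sim\hat\theta(\textsf{O}_1^n)}{\rho_{\mathrm{Ha}}\br{\theta,\tilde\theta}}}$, and let $\pi$ be the product prior on $\Theta=\bc{-1,1}^d$ under which the coordinates are independent with $\pi\br{\theta_j=1}=c$ and $\pi\br{\theta_j=-1}=\bar c$. Since the minimax risk dominates any Bayes risk, $\underline{\mathcal R}_{\rho_{\mathrm{Ha}}}^\star \geq \inf_{\hat\theta}\Ee{\theta\sim\pi}{\mathcal R\br{\theta,\hat\theta}}$, and the whole proof reduces to lower bounding this Bayes risk.

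First I would decompose the Hamming loss coordinate-wise, $\rho_{\mathrm{Ha}}\br{\theta,\tilde\theta}=\sum_{j=1}^d\ind{\tilde\theta_j\neq\theta_j}$, and push the sum outside using that the infimum of a sum is at least the sum of the infima:
\[
\inf_{\hat\theta}\Ee{\theta\sim\pi}{\mathcal R\br{\theta,\hat\theta}} ~\geq~ \sum_{j=1}^d \inf_{\psi_j}\;\Ee{\theta\sim\pi}{\Pp{\textsf{O}_1^n\sim P_\theta^n}{\psi_j\br{\textsf{O}_1^n}\neq\theta_j}} ,
\]
where each $\psi_j:\mathcal O^n\to\bc{-1,1}$ is the (randomized) rule producing the $j$-th output coordinate; the $j$-th summand depends on $\hat\theta$ only through $\psi_j$. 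This is the step that manufactures the factor $d$, and it is exactly the additivity that the two-point bounds of Proposition~\ref{lecam-method} and Corollary~\ref{auxi-minimax-coro} cannot supply on their own.

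Next, for a fixed $j$ I would condition on the remaining coordinates $\theta_{-j}$. Because $\pi$ is a product measure, $\theta_j$ is independent of $\theta_{-j}$ and equals $1$ (resp.\ $-1$) with probability $c$ (resp.\ $\bar c$); writing $\theta^{\pm}:=\br{\theta_{-j},\pm1}$ and pushing the infimum through the expectation over $\theta_{-j}$ gives
\[
\inf_{\psi_j}\Ee{\theta\sim\pi}{\Pp{P_\theta^n}{\psi_j\neq\theta_j}} ~\geq~ \Ee{\theta_{-j}}{\inf_{\psi_j}\bc{c\cdot\Pp{P_{\theta^+}^n}{\psi_j=-1}+\bar c\cdot\Pp{P_{\theta^-}^n}{\psi_j=1}}} .
\]
The inner infimum is the Bayes risk of a weighted binary test between $P_{\theta^+}^n$ and $P_{\theta^-}^n$ with prior weights $c$ and $\bar c$, whose optimal (likelihood-ratio) value is $\int \br{c\,dP_{\theta^+}^n}\wedge\br{\bar c\,dP_{\theta^-}^n}$; by the $c$-primitive $f$-divergence identity \eqref{eq-primitive-f-div-2} this equals exactly $c\wedge\bar c - \mathbb{I}_{f_c}\br{P_{\theta^+}^n,P_{\theta^-}^n}$. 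This is where the asymmetric prior pays off: the weights $c,\bar c$ are precisely what collapse the coordinate test onto the $c$-primitive divergence.

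Finally, since $\theta^{+}$ and $\theta^{-}$ differ only in coordinate $j$ we have $\rho_{\mathrm{Ha}}\br{\theta^+,\theta^-}=1$, so each inner term is at least $c\wedge\bar c - \max_{\rho_{\mathrm{Ha}}\br{\theta,\theta'}=1}\mathbb{I}_{f_c}\br{P_\theta^n,P_{\theta'}^n}$ uniformly in $\theta_{-j}$; summing the $d$ identical bounds yields the claim. The main obstacle is conceptual rather than computational: identifying the right prior (the Bernoulli-$c$ product, not the uniform one) so that the per-coordinate Bayes risk is exactly $c\wedge\bar c-\mathbb{I}_{f_c}$, and carefully justifying the two interchanges — the infimum of the coordinate sum, and pushing the infimum through the expectation over $\theta_{-j}$ (an oracle/data-processing style argument). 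The weighted-Bayes-risk identity from \eqref{eq-primitive-f-div-2} is then the glue; note that the sub-additivity bound of Lemma~\ref{sub-add-f-div} is \emph{not} needed here, since the statement is phrased directly in terms of $\mathbb{I}_{f_c}\br{P_\theta^n,P_{\theta'}^n}$ over the $n$-fold product measures.
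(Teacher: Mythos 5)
Your proof is correct, and it reaches the bound by a genuinely different per-coordinate mechanism than the paper's. Both arguments share the same skeleton --- lower bound the minimax risk by a Bayes risk, decompose $\rho_{\mathrm{Ha}}$ coordinate-wise, and interchange the infimum with the sum --- but they differ in where the weights $c,\bar{c}$ come from. The paper keeps the \emph{uniform} prior $\pi(\theta)=2^{-d}$ and, for each coordinate $i$, constructs a coupling $\mu_i$ supported on pairs differing only in coordinate $i$ with both marginals equal to $\pi$; it then applies Corollary~\ref{auxi-minimax-coro}, which descends from the weighted two-point inequality \eqref{sum-exp-link}, in which $c$ and $\bar{c}$ weight the risks at $\theta$ and $\theta'$ and the equal marginals let the weighted sum collapse to the Bayes risk (because $c+\bar{c}=1$). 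You instead place the asymmetry in the \emph{prior} (the Bernoulli-$c$ product), condition on $\theta_{-j}$, and evaluate the per-coordinate Bayes risk exactly as the optimal weighted test value $\int \br{c\,dP_{\theta^+}^n}\wedge\br{\bar{c}\,dP_{\theta^-}^n}$, which \eqref{eq-primitive-f-div-2} identifies with $c\wedge\bar{c}-\mathbb{I}_{f_c}\br{P_{\theta^+}^n,P_{\theta^-}^n}$. These are dual mechanisms for manufacturing the same quantity. The paper's route buys modularity and generality: Corollary~\ref{auxi-minimax-coro} is proved once for an arbitrary pseudo-metric loss $\rho$ (the triangle inequality does the work there), so Theorem~\ref{assouad-method} becomes a few-line consequence, and the same corollary is available for other losses. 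Your route buys self-containedness and transparency: no coupling construction is needed, the interpretation of $c\wedge\bar{c}-\mathbb{I}_{f_c}$ as an optimal weighted-testing risk is made explicit rather than hidden inside Proposition~\ref{lecam-method}, and the only interchanges used (infimum versus sum, infimum versus expectation over $\theta_{-j}$) are elementary and correctly oriented. Your closing observation that Lemma~\ref{sub-add-f-div} plays no role here also matches the paper, where sub-additivity enters only later, in Corollary~\ref{assouad-corro}.
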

By setting $c = \frac{1}{2}$ in Theorem~\ref{assouad-method}, we recover the standard Assouad's lemma (\cite{assouad1983deux}):
\[
\underline{\mathcal{R}}_{\rho_{\mathrm{Ha}}}^\star ~\geq~ \frac{d}{2} \br{1 - \frac{1}{2} \underset{\theta,\theta': \rho_{\mathrm{Ha}}\br{\theta,\theta'}=1}{\max} d_{\mathrm{TV}}\br{P_{\theta}^n,P_{\theta'}^n}} .
\]

We use the following two properties of the Hellinger distance $\mathrm{He}^2 \br{P,Q}$ to derive a more practically useful version of Assouad's lemma: 
\begin{itemize}
	\item $\mathbb{I}_{f_c}\br{P,Q} ~\leq~ \br{c \wedge \bar{c}} \cdot \mathrm{He} \br{P,Q}$, for all distributions $P, Q \in \mathcal{P}\br{\mathcal{O}}$ (refer \eqref{joint-range-hell-primc})	
	\item $\mathrm{He}^2 \br{\bigotimes_{i=1}^k P_i , \bigotimes_{i=1}^k Q_i} ~\leq~ \sum_{i=1}^{k}{\mathrm{He}^2 \br{P_i ,Q_i}}$, for all distributions $P_i, Q_i \in \mathcal{P}\br{\mathcal{O}_i}$, $i \in [k]$	
\end{itemize}
Armed with these facts, we prove the following version of Assouad's lemma: 
\begin{corollary}
	\label{assouad-corro}
	Let $\mathcal{O}$ be some set and $c \in [0,1]$. Define 
	\[
	\mathcal{P} \br{\mathcal{O}} := \bc{P_{\theta} \in \mathcal{P} \br{\mathcal{O}} : \theta \in \bc{-1,1}^d}
	\]
	be a class of probability measures induced by the parameter space $\Theta = \bc{-1,1}^d$. Suppose that there exists some function $\alpha: \bs{0,1} \rightarrow \mathbb{R}_+$, such that 
	\[
	\mathrm{He}^2 \br{P_{\theta},P_{\theta'}} \leq \alpha\br{c} , \quad \text{ if } \rho_{\mathrm{Ha}}\br{\theta,\theta'} = 1 ,
	\] 
	i.e. the two probability distributions $P_{\theta}$ and $P_{\theta'}$ (associated with the two parameters $\theta$ and $\theta'$ which differ  only in one coordinate) are sufficiently close w.r.t.\ Hellinger distance. Then the minimax risk of the parameter estimation problem with parameter space $\Theta = \bc{-1,1}^d$ and the loss function $\rho = \rho_{\mathrm{Ha}}$ is bounded below by
	\begin{equation}
	\label{assouad-practical-eq}
	\underline{\mathcal{R}}_{\rho_{\mathrm{Ha}}}^\star ~\geq~ d \cdot (c \wedge \bar{c}) \cdot \br{1 - \sqrt{\alpha\br{c} n}} .
	\end{equation}
\end{corollary}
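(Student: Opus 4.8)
The plan is to combine the cost-dependent Assouad bound of Theorem~\ref{assouad-method} with the two stated properties of the Hellinger distance, reducing everything to the single-coordinate hypothesis $\mathrm{He}^2\br{P_\theta, P_{\theta'}} \leq \alpha\br{c}$. Theorem~\ref{assouad-method} already supplies
\[
\underline{\mathcal{R}}_{\rho_{\mathrm{Ha}}}^\star ~\geq~ d\br{c \wedge \bar{c} - \max_{\theta,\theta': \rho_{\mathrm{Ha}}\br{\theta,\theta'}=1} \mathbb{I}_{f_c}\br{P_\theta^n, P_{\theta'}^n}} ,
\]
so it suffices to produce a uniform upper bound on the $c$-primitive divergence $\mathbb{I}_{f_c}\br{P_\theta^n, P_{\theta'}^n}$ over all Hamming-neighbor pairs, and then substitute.

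First I would fix an arbitrary pair $\theta, \theta'$ with $\rho_{\mathrm{Ha}}\br{\theta,\theta'}=1$ and apply the joint-range bound \eqref{joint-range-hell-primc} to the \emph{product} measures $P_\theta^n$ and $P_{\theta'}^n$; dropping the factor $\sqrt{1 - \mathrm{He}^2/4} \leq 1$ yields the first listed property $\mathbb{I}_{f_c}\br{P_\theta^n, P_{\theta'}^n} \leq \br{c \wedge \bar{c}} \cdot \mathrm{He}\br{P_\theta^n, P_{\theta'}^n}$. Next I would invoke the sub-additivity of the squared Hellinger distance (Lemma~\ref{sub-add-f-div}) on the $n$-fold products $P_\theta^n = \bigotimes_{i=1}^n P_\theta$ and $P_{\theta'}^n = \bigotimes_{i=1}^n P_{\theta'}$, which gives $\mathrm{He}^2\br{P_\theta^n, P_{\theta'}^n} \leq n\,\mathrm{He}^2\br{P_\theta, P_{\theta'}}$. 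The single-coordinate hypothesis then bounds the right side by $n\,\alpha\br{c}$, so taking square roots and chaining back through the first property produces the uniform estimate
\[
\mathbb{I}_{f_c}\br{P_\theta^n, P_{\theta'}^n} ~\leq~ \br{c \wedge \bar{c}} \sqrt{\alpha\br{c} n} .
\]

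Finally, since this bound holds uniformly over every Hamming-neighbor pair, it also controls the maximum in Theorem~\ref{assouad-method}; substituting and factoring out $c \wedge \bar{c}$ gives exactly $d \cdot \br{c \wedge \bar{c}} \cdot \br{1 - \sqrt{\alpha\br{c} n}}$, which is \eqref{assouad-practical-eq}. There is no genuinely hard step here: the argument is a direct composition of three already-established results. The only points requiring care are applying the primitive-to-Hellinger bound to the product measures \emph{before} using sub-additivity (so that $n$ enters through the tensorization), and observing that the estimate is uniform in the pair so that it bounds the worst case — both are routine.
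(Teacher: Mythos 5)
Your proposal is correct and follows essentially the same route as the paper's proof: bound $\mathbb{I}_{f_c}\br{P_\theta^n,P_{\theta'}^n}$ by $\br{c \wedge \bar{c}} \cdot \mathrm{He}\br{P_\theta^n,P_{\theta'}^n}$ via the joint-range inequality \eqref{joint-range-hell-primc}, tensorize with the sub-additivity of $\mathrm{He}^2$ from Lemma~\ref{sub-add-f-div} to bring in the factor $\sqrt{\alpha\br{c} n}$, and substitute the resulting uniform bound into Theorem~\ref{assouad-method}. The order of operations you flag as the delicate point (primitive-to-Hellinger on the products first, then tensorization) is exactly how the paper proceeds.
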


The number of training samples $n$ appear in the minimax lower bound \eqref{assouad-practical-eq}. Thus the hardness of the problem can be expressed as a function of the sample size along with other problem specific parameters.

\subsection{Minimax Lower Bounds for Cost-sensitive Classification Problem}
\label{subsec:minmax-cost-sensitive-classification}
A natural question to ask regarding cost-sensitive classification problem is how does the hardness of the problem depend upon the cost parameter $c \in \bs{0,1}$. Let $\mathcal{F} \subseteq \bc{-1,1}^\mathcal{X}$ be the action space and $h \in \bs{0,c \wedge \bar{c}}$ be the margin parameter whose interpretation is explained below. Then we choose a parameter space $\Theta_{h,\mathcal{F}}$ such that:
\begin{enumerate}
	\item $\forall{\theta \in \Theta_{h,\mathcal{F}}}$, $f_\theta^\star \in \mathcal{F}$, where $f_\theta^\star$ is given by \eqref{bayes-classifier-lc}. That is we restrict the parameter space s.t. the Bayes classifier associated with each choice of parameter lies within the predetermined function class $\mathcal{F}$. 
	\item \begin{equation}
	\label{noise-condition}
	\abs{\eta_\theta \br{\textsf{X}} - c} \geq h \text{ a.s. } \forall{\theta \in \Theta_{h,\mathcal{F}}} .
	\end{equation}
	This condition is a generalized notion of \textit{Massart noise condition} with margin $h \in \bs{0 , c \wedge \bar{c}}$ \cite{massart2006risk}. The motivation for this condition is well established by \cite{massart2006risk}. They have argued that under certain ``margin'' type conditions \cite{tsybakov2004optimal} like this, it is possible to design learning algorithms for the binary classification problem, with better rates compared to the case where no such condition is satisfied. 
\end{enumerate}
Thus we consider the problem represented by $\br{\Theta_{h,\mathcal{F}}, \br{\mathcal{X} \times \bc{-1,1}}^n, \mathcal{F}, \ell_{d_{c}}}$ and the minimax risk $\underline{\mathcal{R}}_{\Delta \ell_{d_{c}}}^\star$ (in terms of regret) of it given by 
\begin{equation}
\label{cost-minmax-risk-eq}
\inf_{\hat{f}}{\sup_{\theta \in \Theta_{h,\mathcal{F}}}{\Ee{\bc{\br{\textsf{X}_i,\textsf{Y}_i}}_{i=1}^n \sim P_{\theta}^n}{\Ee{f \sim \hat{f}\br{\bc{\br{\textsf{X}_i,\textsf{Y}_i}}_{i=1}^n}}{\Delta \ell_{d_c} \br{\theta,f}}}}} .
\end{equation}
The following is a generalization of the result proved in \cite[Theorem~4]{massart2006risk} for $c = \frac{1}{2}$.
\begin{theorem}
	\label{main-theo-cost-classi}
	Let $\mathcal{F}$ be a VC class of binary-valued functions on $\mathcal{X}$ with VC dimension (refer Appendix~\ref{sec:vc-dimension-sec}) $V \geq 2$. Then for any $n \geq V$ and any $h \in [0,c \wedge \bar{c}]$, the minimax risk \eqref{cost-minmax-risk-eq} of the cost-sensitive binary classification problem $\br{\Theta_{h,\mathcal{F}}, \br{\mathcal{X} \times \bc{-1,1}}^n, \mathcal{F}, \ell_{d_{c}}}$ is lower bounded as follows: 
	\[
	\underline{\mathcal{R}}_{\Delta \ell_{d_c}}^\star ~\geq~ K \cdot (c \wedge \bar{c}) \cdot \min\br{\sqrt{\frac{(c \wedge \bar{c}) V}{n}},(c \wedge \bar{c}) \cdot \frac{V}{nh}}
	\]
	where $K > 0$ is some absolute constant. 
\end{theorem}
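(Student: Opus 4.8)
\section*{Proof proposal}

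The plan is to reduce the cost-sensitive problem over $\Theta_{h,\mathcal F}$ to the Hamming-loss parameter-estimation problem of Corollary~\ref{assouad-corro} by exploiting the shattering power of a VC class, and then to optimize the two free parameters of the construction (a marginal weight and an \emph{effective margin}) so that the two competing terms in the $\min$ each appear in its own regime. Since $V\ge 2$, pick a set $\{x_1,\dots,x_V\}\subseteq\mathcal X$ shattered by $\mathcal F$, and index a subfamily of $\Theta_{h,\mathcal F}$ by $\theta\in\{-1,1\}^V$. I fix a single, $\theta$-independent marginal $M$ that places mass $w$ on each $x_j$ (with $w\le 1/V$) and the residual mass $1-Vw$ on an auxiliary point $x_0$ carrying a $\theta$-independent deterministic label; on the shattered points I set $\eta_\theta(x_j)=c+\theta_j h'$ for an effective margin $h'\in[h,c\wedge\bar c]$. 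This guarantees $\abs{\eta_\theta-c}=h'\ge h$ almost surely, so each such $\theta$ lies in $\Theta_{h,\mathcal F}$, and $f_\theta^\star(x_j)=\mathrm{sign}(\eta_\theta(x_j)-c)=\theta_j$, so that shattering (together with a label at $x_0$ chosen compatibly with $\mathcal F$) makes every $f_\theta^\star$ realizable in $\mathcal F$.

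Next I would invoke Lemma~\ref{cost-sensitive-lemma}. Because $\abs{\eta_\theta-c}=h'$ on the shattered points, $\abs{f(x_j)-f_\theta^\star(x_j)}=2\,\ind{f(x_j)\neq\theta_j}$ for $\{-1,1\}$-valued functions, and the $x_0$ contribution is nonnegative,
\[
\Delta\ell_{d_c}(\theta,f)\ \ge\ h'w\sum_{j=1}^V\ind{f(x_j)\neq\theta_j}=h'w\,\rho_{\mathrm{Ha}}(\theta,\hat\theta),
\]
where $\hat\theta_j:=f(x_j)$ is the estimator of $\theta$ induced by $f$. Taking expectations and the $\inf$--$\sup$ gives $\underline{\mathcal R}_{\Delta\ell_{d_c}}^\star\ge h'w\,\underline{\mathcal R}_{\rho_{\mathrm{Ha}}}^\star$. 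Since $P_\theta$ and $P_{\theta'}$ with $\rho_{\mathrm{Ha}}(\theta,\theta')=1$ differ only at the single flipped point (the $x_0$ part cancels),
\[
\mathrm{He}^2(P_\theta,P_{\theta'})=2w\bigl(1-\sqrt{c^2-(h')^2}-\sqrt{\bar c^2-(h')^2}\bigr)\ \le\ \frac{2w(h')^2}{c\bar c},
\]
using $\sqrt{a^2-(h')^2}\ge a-(h')^2/a$, valid because $h'\le c\wedge\bar c$. Feeding $\alpha(c)=2w(h')^2/(c\bar c)$ into Corollary~\ref{assouad-corro} (with $d=V$) yields
\[
\underline{\mathcal R}_{\Delta\ell_{d_c}}^\star\ \ge\ h'wV(c\wedge\bar c)\Bigl(1-\sqrt{2w(h')^2 n/(c\bar c)}\Bigr).
\]

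Finally I would maximize this bound over the admissible $w\in(0,1/V]$ and $h'\in[h,c\wedge\bar c]$, which is where the real work lies. Two regimes emerge. When $h\gtrsim\sqrt{(c\wedge\bar c)V/n}$ (large margin), I keep $h'=h$ and choose $w\asymp c\bar c/(nh^2)\le 1/V$, which pins the Hellinger factor to a constant and produces the $(c\wedge\bar c)\,V/(nh)$ rate. When $h$ is below this threshold the Massart constraint is slack, so I inflate the effective margin to $h'\asymp\sqrt{c\bar c V/n}$ (with $w=1/V$), again tuning the Hellinger factor to a constant and producing the $\sqrt{(c\wedge\bar c)V/n}$ rate. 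In both cases the elementary bound $c\bar c\ge(c\wedge\bar c)/2$ converts the $c\bar c$ factors into $c\wedge\bar c$, and the two regimes assemble into the stated $\min$. The main obstacle is precisely this joint optimization under the feasibility constraints $w\le 1/V$ and $h\le h'\le c\wedge\bar c$: one must check that the optimizing pair $(w,h')$ is admissible in each regime — in particular the corner where $c\wedge\bar c$ is very small relative to $V/n$, where the constraint $h'\le c\wedge\bar c$ becomes binding and forces a suboptimal but still sufficient choice — and then consolidate all absolute constants into a single $K>0$.
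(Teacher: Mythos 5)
Your strategy is essentially the paper's: embed a Hamming cube of distributions into $\Theta_{h,\mathcal{F}}$ via a shattered set, convert classification regret to Hamming-loss estimation through Lemma~\ref{cost-sensitive-lemma}, apply the Hellinger-based Assouad bound of Corollary~\ref{assouad-corro}, and split into two regimes (tune the marginal weight at margin $h$; inflate the margin to $\asymp\sqrt{(c\wedge\bar c)V/n}$ at maximal weight). But there is one genuine gap, in the construction itself. You index the cube by \emph{all} $V$ shattered points and place the residual mass $1-Vw$ on a separate auxiliary point $x_0$. In your large-margin regime $w<1/V$, so $x_0$ carries positive mass, and membership of $P_\theta$ in $\Theta_{h,\mathcal{F}}$ then requires some $f\in\mathcal{F}$ with $f(x_j)=\theta_j$ for all $j\in[V]$ \emph{and} $f(x_0)=s$, where $s=\mathrm{sign}(\eta(x_0)-c)$ is the fixed Bayes label at $x_0$. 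Shattering of $\{x_1,\dots,x_V\}$ does not give you this, and no ``compatible'' choice of label at $x_0$ need exist: take $\mathcal{X}=\{x_0,x_1,\dots,x_V\}$ and $\mathcal{F}=\{f:\ f(x_0)=\prod_{j=1}^{V}f(x_j)\}$, which has VC dimension exactly $V$, yet the value forced at $x_0$ depends on the parity of $\theta$, so no single deterministic label at $x_0$ is jointly realizable with all $2^V$ patterns; restricting to a fixed-parity subcube does not help, since its vertices differ in two coordinates and Assouad needs single-coordinate flips. This is exactly why the paper sacrifices one shattered point: it parameterizes only $B=\{-1,1\}^{V-1}$ and puts the residual mass on the $V$-th shattered point with $\eta\equiv 0$ there, so that shattering of all $V$ points guarantees the pattern $(\theta,-1)$ is realizable in $\mathcal{F}$; since $V-1\ge V/2$, the rate is unaffected. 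Your proof goes through once you make this substitution.

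Two of your local steps are in fact cleaner than the paper's, and worth keeping. The direct reduction $\hat\theta_j:=f(x_j)$ maps any classifier to a cube estimator pointwise, avoiding the paper's closest-point argument ($b_f:=\argmin_b\|f-f_b^\star\|_{L_1}$ plus triangle inequality) and its factor of $2$. And your elementary bound $\sqrt{a^2-t^2}\ge a-t^2/a$ gives $\mathrm{He}^2(P_\theta,P_{\theta'})\le 2w(h')^2/(c\bar c)$ directly, replacing the paper's series-expansion Lemma~\ref{aux-bound-lemma}; it is also tighter, since $1/(c\bar c)\le 2/(c\wedge\bar c)$. One caution on the corner you flag: when $c\wedge\bar c\lesssim V/n$ the constraint $h'\le c\wedge\bar c$ binds, and your assertion that a feasible choice is ``still sufficient'' is unsubstantiated --- with $h'\le c\wedge\bar c$ and any admissible $w$, this family of constructions yields at best order $(c\wedge\bar c)^2$, which is strictly weaker than the claimed $(c\wedge\bar c)^{3/2}\sqrt{V/n}$ in that corner. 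The paper's own proof has the same unacknowledged limitation (its inflated margin $\tilde h$ can exceed $c\wedge\bar c$ there), so this does not distinguish your argument from theirs, but it should be stated as an open caveat rather than a resolved case.
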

When $h = 0$ (or being too small), we get a minimax lower bound of order $\br{c \wedge \bar{c}}^{\frac{3}{2}} \cdot \sqrt{\frac{V}{n}}$, and when $h = c \wedge \bar{c}$, we obtain a bound of the order $\br{c \wedge \bar{c}} \cdot \frac{V}{n}$. When $c=1/2$ in Theorem~\ref{main-theo-cost-classi}, we recover the standard minimax lower bounds for balanced binary classification (with zero-one loss function) presented in \cite[Theorem~4]{massart2006risk}:
\[
\underline{\mathcal{R}}_{\Delta \ell^{0-1}}^\star ~\geq~ K' \cdot \min\br{\sqrt{\frac{V}{n}},\frac{V}{nh}} ,
\]
for some constant $K' > 0$.

	\section{Conclusion}
\label{sec:conclusion}

We have investigated the influence of cost terms on the hardness of the cost-sensitive classification problem (Theorem~\ref{main-theo-cost-classi}) by extending the minimax lower bound analysis for balanced binary classification \cite[Theorem~4]{massart2006risk}. 

It would be interesting to study the hardness of the following classification problem settings which are closely related to the binary cost-sensitive classification problem that we considered in this paper:
\begin{enumerate}
	\item Cost-sensitive classification with example dependent costs (\cite{zadrozny2001learning,zadrozny2003cost}).
	\item Binary classification problem w.r.t.\ generalized performance measures (\cite{koyejo2014consistent}) such as arithmetic, geometric and harmonic means of the true positive and true negative rates. These measures are more appropriate for imbalanced classification problem (\cite{cardie1997improving,elkan2001foundations}) than the usual classification accuracy. 
\end{enumerate}

	\bibliographystyle{plain}
	\bibliography{example_paper}

\begin{thebibliography}{10}

\bibitem{abe2004iterative}
Naoki Abe, Bianca Zadrozny, and John Langford.
\newblock An iterative method for multi-class cost-sensitive learning.
\newblock In {\em Proceedings of the tenth ACM SIGKDD international conference
  on Knowledge discovery and data mining}, pages 3--11. ACM, 2004.

\bibitem{ali1966general}
Syed~Mumtaz Ali and Samuel~D Silvey.
\newblock A general class of coefficients of divergence of one distribution
  from another.
\newblock {\em Journal of the Royal Statistical Society. Series B
  (Methodological)}, pages 131--142, 1966.

\bibitem{assouad1983deux}
Patrice Assouad.
\newblock Deux remarques sur l'estimation.
\newblock {\em Comptes rendus des s{\'e}ances de l'Acad{\'e}mie des sciences.
  S{\'e}rie 1, Math{\'e}matique}, 296(23):1021--1024, 1983.

\bibitem{brefeld2003support}
Ulf Brefeld, Peter Geibel, and Fritz Wysotzki.
\newblock Support vector machines with example dependent costs.
\newblock In {\em European Conference on Machine Learning}, pages 23--34.
  Springer, 2003.

\bibitem{cardie1997improving}
Claire Cardie and Nicholas Nowe.
\newblock Improving minority class prediction using case-specific feature
  weights.
\newblock In {\em Proceedings of the Fourteenth International Conference on
  Machine Learning}, pages 57--65, 1997.

\bibitem{chan1998toward}
Philip~K Chan and Salvatore~J Stolfo.
\newblock Toward scalable learning with non-uniform class and cost
  distributions: A case study in credit card fraud detection.
\newblock In {\em KDD}, volume 1998, pages 164--168, 1998.

\bibitem{csiszar1972class}
I~Csisz{\'a}r.
\newblock A class of measures of informativity of observation channels.
\newblock {\em Periodica Mathematica Hungarica}, 2(1-4):191--213, 1972.

\bibitem{devroye2013probabilistic}
Luc Devroye, L{\'a}szl{\'o} Gy{\"o}rfi, and G{\'a}bor Lugosi.
\newblock {\em A probabilistic theory of pattern recognition}, volume~31.
\newblock Springer Science \& Business Media, 2013.

\bibitem{elkan2001foundations}
Charles Elkan.
\newblock The foundations of cost-sensitive learning.
\newblock In {\em International joint conference on artificial intelligence},
  volume~17, pages 973--978. Lawrence Erlbaum Associates Ltd, 2001.

\bibitem{harremoes2011pairs}
P~Harremoes and I~Vajda.
\newblock On pairs of $ f $-divergences and their joint range.
\newblock {\em IEEE Transactions on Information Theory}, 57(6):3230--3235,
  2011.

\bibitem{knoll1994cost}
Ulrich Knoll, Gholamreza Nakhaeizadeh, and Birgit Tausend.
\newblock Cost-sensitive pruning of decision trees.
\newblock In {\em Machine Learning: ECML-94}, pages 383--386. Springer, 1994.

\bibitem{koyejo2014consistent}
Oluwasanmi~O Koyejo, Nagarajan Natarajan, Pradeep~K Ravikumar, and Inderjit~S
  Dhillon.
\newblock Consistent binary classification with generalized performance
  metrics.
\newblock In {\em Advances in Neural Information Processing Systems},
  volume~27, pages 2744--2752, 2014.

\bibitem{le2012asymptotic}
Lucien Le~Cam.
\newblock {\em Asymptotic methods in statistical decision theory}.
\newblock Springer Science \& Business Media, 2012.

\bibitem{liese2006divergences}
Friedrich Liese and Igor Vajda.
\newblock On divergences and informations in statistics and information theory.
\newblock {\em IEEE Transactions on Information Theory}, 52(10):4394--4412,
  2006.

\bibitem{massart2006risk}
Pascal Massart and {\'E}lodie N{\'e}d{\'e}lec.
\newblock Risk bounds for statistical learning.
\newblock {\em The Annals of Statistics}, pages 2326--2366, 2006.

\bibitem{yurilecnotes}
Y.~Polyanskiy and Y.~Wu.
\newblock Lecture notes on information theory.
\newblock \url{http://people.lids.mit.edu/yp/homepage/data/itlectures_v4.pdf},
  2016.

\bibitem{raginskylecnotes}
Maxim Raginsky.
\newblock Minimax lower bounds (statistical learning theory).
\newblock
  \url{http://maxim.ece.illinois.edu/teaching/fall15b/notes/minimax.pdf}, 2015.

\bibitem{reid2011information}
Mark~D. Reid and Robert~C. Williamson.
\newblock Information, divergence and risk for binary experiments.
\newblock {\em Journal of Machine Learning Research}, 12:731--817, 2011.

\bibitem{scott2012calibrated}
Clayton Scott et~al.
\newblock Calibrated asymmetric surrogate losses.
\newblock {\em Electronic Journal of Statistics}, 6:958--992, 2012.

\bibitem{tsybakov2004optimal}
Alexandre~B Tsybakov.
\newblock Optimal aggregation of classifiers in statistical learning.
\newblock {\em Annals of Statistics}, 32:135--166, 2004.

\bibitem{Tsybakov2009nonparbook}
Alexandre~B. Tsybakov.
\newblock {\em Introduction to Nonparametric Estimation}.
\newblock Springer, 2009.

\bibitem{vapnik1971uniform}
VN~Vapnik and A~Ya Chervonenkis.
\newblock On the uniform convergence of relative frequencies of events to their
  probabilities.
\newblock {\em Theory of Probability and its Applications}, 16(2):264, 1971.

\bibitem{yang1999information}
Yuhong Yang and Andrew Barron.
\newblock Information-theoretic determination of minimax rates of convergence.
\newblock {\em Annals of Statistics}, 27:1564--1599, 1999.

\bibitem{yu1997assouad}
Bin Yu.
\newblock Assouad, {F}ano, and {L}e {C}am.
\newblock In {\em Festschrift for Lucien Le Cam}, pages 423--435. Springer,
  1997.

\bibitem{zadrozny2001learning}
Bianca Zadrozny and Charles Elkan.
\newblock Learning and making decisions when costs and probabilities are both
  unknown.
\newblock In {\em Proceedings of the 7th International Conference on Knowledge
  discovery and data mining}, pages 204--213. ACM, 2001.

\bibitem{zadrozny2003cost}
Bianca Zadrozny, John Langford, and Naoki Abe.
\newblock Cost-sensitive learning by cost-proportionate example weighting.
\newblock In {\em Proceedings of the Third IEEE International Conference on
  Data Mining}, pages 435--442. IEEE, 2003.

\bibitem{zhao2015cost}
Peilin Zhao, Furen Zhuang, Min Wu, Xiao-Li Li, and Steven~CH Hoi.
\newblock Cost-sensitive online classification with adaptive regularization and
  its applications.
\newblock In {\em Data Mining (ICDM), 2015 IEEE International Conference on},
  pages 649--658. IEEE, 2015.

\end{thebibliography}
	
	\appendix
	\section{Proofs}
\label{sec:proofs}

\begin{replemma}{cost-sensitive-lemma}
	\label{repthm:cost-sensitive-lemma}
	Consider the binary classification problem $\br{\Theta,\br{\mathcal{X} \times \bc{-1,1}}^n,\mathcal{F},\ell_{d_{c}}}$. For any $f \in \mathcal{F}$ and $c \in (0,1)$,
	\[
	\Delta \ell_{d_{c}} \br{\theta,f} ~=~ \frac{1}{2} \cdot \Ee{\textnormal{\textsf{X}}\sim M_\theta}{\abs{\eta_\theta \br{\textnormal{\textsf{X}}} - c} \cdot \abs{f\br{\textnormal{\textsf{X}}} - f_\theta^\star \br{\textnormal{\textsf{X}}}}} ,
	\]
	where $f_\theta^\star$ is given by \eqref{bayes-classifier-lc}.
\end{replemma}

\begin{proof}
	Consider a fixed $x \in \mathcal{X}$. Recall that 
	\[
	f_\theta^\star\br{x} = \argmin_{f \in \bc{-1,1}^\mathcal{X}} {\Ee{\textsf{Y} \sim \eta_\theta(x)}{d_{c}\br{\textsf{Y},f\br{x}}}} = \mathrm{sign}\br{\eta_\theta \br{x} - c} .
	\]
	Therefore $\inf_{f \in \bc{-1,1}^\mathcal{X}}{\Ee{\textsf{Y} \sim \eta_\theta(x)}{d_{c}\br{\textsf{Y},f\br{x}}}} = \Ee{\textsf{Y} \sim \eta_\theta(x)}{d_{c}\br{\textsf{Y},f_\theta^\star\br{x}}}$.
	This implies
	\begin{align*}
	& \Ee{\textsf{Y} \sim \eta_\theta(x)}{d_{c}\br{\textsf{Y},f\br{x}}} - \Ee{\textsf{Y} \sim \eta_\theta(x)}{d_{c}\br{\textsf{Y},f_\theta^\star\br{x}}} \\
	~=~& \bar{c} \, \eta_\theta \br{x} \ind{f(x) \neq 1} + c \, \bar{\eta_\theta \br{x}} \ind{f(x) \neq -1} \\
	& - \bc{\bar{c} \, \eta_\theta \br{x} \ind{f_\theta^\star(x) \neq 1} + c \, \bar{\eta_\theta \br{x}} \ind{f_\theta^\star(x) \neq -1}} \\
	~=~& \ind{f(x) \neq f_\theta^\star(x)} \abs{\eta_\theta \br{x} - c} \\
	~=~& \frac{1}{2} \cdot \abs{f(x) - f_\theta^\star(x)} \cdot \abs{\eta_\theta \br{x} - c} .
	\end{align*}
	Then the proof is completed by noting that
	\begin{align*}
	\ell_{d_{c}} \br{\theta,f} - \ell_{d_{c}} \br{\theta,f_\theta^\star} ~=~& \Ee{\textsf{X} \sim M_\theta}{\Ee{\textsf{Y} \sim \eta_\theta\br{\textsf{X}}}{d_{c}\br{\textsf{Y},f\br{\textsf{X}}}} - \Ee{\textsf{Y} \sim \eta_\theta\br{\textsf{X}}}{d_{c}\br{\textsf{Y},f_\theta^\star\br{\textsf{X}}}}} \\
	~=~& \frac{1}{2} \, \Ee{\textsf{X} \sim M_\theta}{\abs{f\br{\textsf{X}} - f_\theta^\star\br{\textsf{X}}} \cdot \abs{\eta_\theta \br{\textsf{X}} - c}} .
	\end{align*}
\end{proof}

\begin{replemma}{sub-add-f-div}
	\label{repthm:sub-add-f-div}
	For all collections of distributions $P_i , Q_i \in \mathcal{P} \br{\mathcal{O}_i}$, $i \in [k]$
	\[
	d_{\mathrm{TV}} \br{\bigotimes_{i=1}^k P_i , \bigotimes_{i=1}^k Q_i} ~\leq~ \sum_{i=1}^{k}{d_{\mathrm{TV}} \br{P_i ,Q_i}} ,
	\]	
	and 
	\[
	\mathrm{He}^2 \br{\bigotimes_{i=1}^k P_i , \bigotimes_{i=1}^k Q_i} ~\leq~ \sum_{i=1}^{k}{\mathrm{He}^2 \br{P_i ,Q_i}} .
	\]
\end{replemma}

\begin{proof}
	Firstly $\br{P,Q} \mapsto d_{\mathrm{TV}} \br{P,Q}$ is a metric. Thus 
	\begin{align*}
	& d_{\mathrm{TV}}\br{\bigotimes_{i=1}^k P_i , \bigotimes_{i=1}^k Q_i} \\
	~=~& d_{\mathrm{TV}} \br{P_1 \otimes \br{\bigotimes_{i=2}^k P_i} , Q_1 \otimes \br{\bigotimes_{i=2}^k Q_i}} \\ 
	~\leq~& d_{\mathrm{TV}} \br{P_1 \otimes \br{\bigotimes_{i=2}^k P_i} , Q_1 \otimes \br{\bigotimes_{i=2}^k P_i}} + d_{\mathrm{TV}} \br{Q_1 \otimes \br{\bigotimes_{i=2}^k P_i} , Q_1 \otimes \br{\bigotimes_{i=2}^k Q_i}} \\ 
	~=~& d_{\mathrm{TV}} \br{P_1 , Q_1} + d_{\mathrm{TV}} \br{\bigotimes_{i=2}^k P_i , \bigotimes_{i=2}^k Q_i} ,
	\end{align*}
	where the second line follows by definition, the third follows from the triangle inequality and the forth is easily verified from the definition of $d_\mathrm{TV}\br{\cdot,\cdot}$. To complete the proof proceed inductively. 
	
	Let $\mu$ be a product measure on $\mathcal{O}_1 \times \mathcal{O}_2$, written as $\mu = \mu_1 \otimes \mu_2$, where $\mu_i := \mu \circ \pi_i$ denotes the image measure of the projection $\pi_i:\mathbb{R}^2 \ni \br{x_1,x_2} \mapsto \pi_i\br{x_1,x_2} = x_i$ w.r.t.\ $\mu$. Also let $P = P_1 \otimes P_2$, and $Q = Q_1 \otimes Q_2$. Define $p := \frac{dP}{d\mu}$, $q := \frac{dQ}{d\mu}$, $p_1 := \frac{dP_1}{d\mu_1}$, $p_2 := \frac{dP_2}{d\mu_2}$, $q_1 := \frac{dQ_1}{d\mu_1}$, and $q_2 := \frac{dQ_2}{d\mu_2}$. Then, by Tonelli's theorem, 
	\begin{align*}
	1 - \frac{1}{2} \mathrm{He}^2 \br{P,Q} ~=~& \int{\sqrt{pq} d\mu} \\
	~=~& \int{\sqrt{p_1 q_1} d\mu_1} \cdot \int{\sqrt{p_2 q_2} d\mu_2} \\
	~=~& \br{1 - \frac{1}{2} \mathrm{He}^2 \br{P_1,Q_1}} \cdot \br{1 - \frac{1}{2} \mathrm{He}^2 \br{P_2,Q_2}} .
	\end{align*}
	Thus we have 
	\begin{align*}
	\mathrm{He}^2 \br{P,Q} ~=~& 2 - 2 \br{1 - \frac{1}{2} \mathrm{He}^2 \br{P_1,Q_1}} \cdot \br{1 - \frac{1}{2} \mathrm{He}^2 \br{P_2,Q_2}} \\
	~=~& \mathrm{He}^2 \br{P_1,Q_1} + \mathrm{He}^2 \br{P_2,Q_2} - \frac{1}{2} \mathrm{He}^2 \br{P_1,Q_1} \mathrm{He}^2 \br{P_2,Q_2} \\
	~\leq~& \mathrm{He}^2 \br{P_1,Q_1} + \mathrm{He}^2 \br{P_2,Q_2} .
	\end{align*}
	To complete the proof proceed the above process iteratively. 
\end{proof}

\begin{repproposition}{lecam-method}
	\label{repthm:lecam-method}
	For any $c \in \br{0,1}$, the minimax risk $\underline{\mathcal{R}}_\rho^\star$ (given by \eqref{minimax-risk-parameter}) of the parameter estimation problem with (pseudo metric) loss function $\rho:\Theta \times \Theta \rightarrow \mathbb{R}$ is bounded from below as follows:
	\[
	\underline{\mathcal{R}}_\rho^\star ~\geq~ \sup_{\theta \neq \theta'}{\bc{\rho\br{\theta,\theta'} \cdot \br{c \wedge \bar{c} - \mathbb{I}_{f_c}\br{P_{\theta}^n,P_{\theta'}^n}}}} ,
	\]
	where $f_c$ is given by \eqref{primitive-fdiv-tent-func}.
\end{repproposition}

\begin{proof}
	Let $c \in \br{0,1}$ be arbitrary but fixed. Consider any two fixed parameters $\theta,\theta' \in \Theta$ s.t. $\theta \neq \theta'$ and an arbitrary estimator $\hat{\theta}:\mathcal{O}^n \rightsquigarrow \Theta$. Let $P_\theta^n$, and $P_{\theta'}^n$ (associated probability densities can be written as $dP_\theta^n$ and $dP_{\theta'}^n$) be the probability measures induced by $\theta$ and $\theta'$ respectively. For an arbitrary (but fixed) set of observations $o_1^n \in \mathcal{O}^n$, when $\bar{c} \cdot dP_{\theta'}^n\br{o_1^n} \geq c \cdot dP_\theta^n\br{o_1^n}$, we have
	\begin{align}
	& c \cdot dP_\theta^n\br{o_1^n} \Ee{\tilde{\theta} \sim \hat{\theta}(o_1^n)}{\rho(\theta,\tilde{\theta})} + \bar{c} \cdot dP_{\theta'}^n\br{o_1^n} \Ee{\tilde{\theta} \sim \hat{\theta}(o_1^n)}{\rho(\theta',\tilde{\theta})} \nonumber \\
	~=~& c \cdot dP_\theta^n\br{o_1^n} \Ee{\tilde{\theta} \sim \hat{\theta}(o_1^n)}{\rho(\theta,\tilde{\theta}) + \rho(\theta',\tilde{\theta})} + \br{\bar{c} \cdot dP_{\theta'}^n\br{o_1^n} - c \cdot dP_\theta^n\br{o_1^n}} \Ee{\tilde{\theta} \sim \hat{\theta}(o_1^n)}{\rho(\theta',\tilde{\theta})} \nonumber \\ 
	\stackrel{(i)}{~\geq~}& c \cdot dP_\theta^n\br{o_1^n} \Ee{\tilde{\theta} \sim \hat{\theta}(o_1^n)}{\rho(\theta,\tilde{\theta}) + \rho(\theta',\tilde{\theta})} \nonumber \\
	\stackrel{(ii)}{~\geq~}& c \cdot dP_\theta^n\br{o_1^n} \rho\br{\theta,\theta'}, \label{int-case-01}
	\end{align}
	where $(i)$ is due to $\bar{c} \cdot dP_{\theta'}^n\br{o_1^n} \geq c \cdot dP_\theta^n\br{o_1^n}$, and $(ii)$ is due to the triangle inequality. Similarly, for the case where $\bar{c} \cdot dP_{\theta'}^n\br{o_1^n} \leq c \cdot dP_\theta^n\br{o_1^n}$, we get
	\begin{equation}
	\label{int-case-02}
	c \cdot dP_\theta^n\br{o_1^n} \Ee{\tilde{\theta} \sim \hat{\theta}(o_1^n)}{\rho(\theta,\tilde{\theta})} + \bar{c} \cdot dP_{\theta'}^n\br{o_1^n} \Ee{\tilde{\theta} \sim \hat{\theta}(o_1^n)}{\rho(\theta',\tilde{\theta})} ~\geq~ \bar{c} \cdot dP_{\theta'}^n\br{o_1^n} \rho\br{\theta,\theta'}.
	\end{equation}
	By combining \eqref{int-case-01} and \eqref{int-case-02}, and summing over all $o_1^n \in \mathcal{O}^n$, we get, for any two $\theta, \theta' \in \Theta$ and any estimator $\hat{\theta}$, 
	\begin{align}
	& c \cdot \Ee{\textsf{O}_1^n \sim P_\theta^n}{\Ee{\tilde{\theta} \sim \hat{\theta}\br{\textsf{O}_1^n}}{\rho(\theta,\tilde{\theta})}} + \bar{c} \cdot \Ee{\textsf{O}_1^n \sim P_{\theta'}^n}{\Ee{\tilde{\theta} \sim \hat{\theta}\br{\textsf{O}_1^n}}{\rho(\theta',\tilde{\theta})}} \label{tempo-112} \\
	~\geq~& \rho\br{\theta,\theta'} \cdot \int{c dP_\theta^n \wedge \bar{c} dP_{\theta'}^n} \nonumber \\
	~=~& \rho\br{\theta,\theta'} \cdot \br{c \wedge \bar{c} - \mathbb{I}_{f_c}\br{P_\theta^n,P_{\theta'}^n}}, \label{sum-exp-link}
	\end{align}
	where the last equality follows from the definition of $c$-primitive $f$-divergences \eqref{eq-primitive-f-div-2}. By taking the supremum of both sides over the choices of $\theta, \theta'$ (since then the two terms in \eqref{tempo-112} collapse to one), we have
	\[
	\sup_{\theta \in \Theta}{\Ee{\textsf{O}_1^n \sim P_\theta^n}{\Ee{\tilde{\theta} \sim \hat{\theta}\br{\textsf{O}_1^n}}{\rho(\theta,\tilde{\theta})}}} ~\geq~ \sup_{\theta \neq \theta'}{\bc{\rho\br{\theta,\theta'} \cdot \br{c \wedge \bar{c} - \mathbb{I}_{f_c}\br{P_{\theta}^n,P_{\theta'}^n}}}}.
	\]
	The proof is completed by taking the infimum of both sides over $\hat{\theta}$. 
\end{proof}

\begin{repcorollary}{auxi-minimax-coro}
	\label{repthm:auxi-minimax-coro}
	Let $\pi$ be any prior distribution on $\Theta$, and let $\mu$ be any joint probability distribution of a random pair $(\theta,\theta')\in\Theta \times \Theta$, such that the marginal distributions of both $\theta$ and $\theta'$ are equal to $\pi$. Then for any $c \in \br{0,1}$, the minimax risk $\underline{\mathcal{R}}_\rho^\star$ (given by \eqref{minimax-risk-parameter}) of the parameter estimation problem is bounded from below as follows:
	\[
	\underline{\mathcal{R}}_\rho^\star ~\geq~ \Ee{(\theta,\theta') \sim \mu}{\rho\br{\theta,\theta'} \cdot \br{c \wedge \bar{c} - \mathbb{I}_{f_c}\br{P_{\theta}^n,P_{\theta'}^n}}} .
	\]
\end{repcorollary}

\begin{proof}
	First observe that for any prior $\pi$ 
	\[
	\underline{\mathcal{R}}_\rho^\star ~\geq~ \inf_{\hat{\theta}} \Ee{\theta \sim \pi}{\Ee{\textsf{O}_1^n \sim P_{\theta}^n}{\Ee{\tilde{\theta} \sim \hat{\theta}\br{\textsf{O}_1^n}}{\rho(\theta,\tilde{\theta})}}} ,
	\]
	since the minimax risk can be lower bounded by the Bayesian risk. Then by taking expectation of both sides of \eqref{sum-exp-link} w.r.t $\mu$ and using the fact that, under $\mu$, both $\theta$ and $\theta'$ have the same distribution $\pi$, the proof is completed. 
\end{proof}

\begin{reptheorem}{assouad-method}
	\label{repthm:assouad-method}
	Let $d \in \mathbb{N}$, $\Theta=\bc{-1,1}^d$ and $\rho = \rho_{\mathrm{Ha}}$, where the Hamming distance $\rho_{\mathrm{Ha}}$ is given by \eqref{eq-hamming-distance}. Then for any $c \in \br{0,1}$, the minimax risk of the parameter estimation problem satisfies
	\[
	\underline{\mathcal{R}}_{\rho_{\mathrm{Ha}}}^\star ~\geq~ d \br{c \wedge \bar{c} - \underset{\theta,\theta': \rho_{\mathrm{Ha}}\br{\theta,\theta'}=1}{\max} \mathbb{I}_{f_c}\br{P_{\theta}^n,P_{\theta'}^n}} .
	\]
\end{reptheorem}

\begin{proof}
	Recall that $\rho_{\mathrm{Ha}} (\theta,\theta') = \sum_{i=1}^{d}{\rho_i (\theta,\theta')}$, where $\rho_i (\theta,\theta') := \ind{\theta_i \neq \theta'_i}$, and each $\rho_i$ is a pseudo metric. Let $\pi (\theta) = \frac{1}{2^d}, \forall{\theta \in \bc{-1,1}^d}$. Also for each $i \in \bs{d}$, let $\mu_i$ be the distribution in $\Theta \times \Theta$ such that any random pair $(\theta,\theta') \in \Theta \times \Theta$ drawn according to $\mu_i$ satisfies
	\begin{enumerate}
		\item $\theta \sim \pi$ 
		\item $\rho_i (\theta,\theta') = 1$, and $\rho_{\mathrm{Ha}} (\theta,\theta') = 1$ ($\theta$ and $\theta'$ differ only in the $i$-th coordinate). 
	\end{enumerate}
	Then the marginal distribution of $\theta'$ under $\mu_i$ is
	\[
	\sum_{\theta \in \bc{-1,1}^d}^{}{\mu_i (\theta,\theta')} = \frac{1}{2^d} \sum_{\theta \in \bc{-1,1}^d}^{}{\ind{\theta_i \neq \theta'_i \text{ and } \theta_j = \theta'_j , j \neq i}} = \frac{1}{2^d} = \pi (\theta') ,
	\]
	since by construction of $\mu$, $\rho_{\mathrm{Ha}} \br{\theta , \theta'} = 1$ and for each $\theta'$ there is only one $\theta$ that differs from it in a single coordinate. Now consider 
	\begin{align*}
	\underline{\mathcal{R}}_{\rho_{\mathrm{Ha}}}^\star \stackrel{(i)}{~\geq~}& \inf_{\hat{\theta}} \Ee{\theta \sim \pi}{\Ee{\textsf{O}_1^n \sim P_{\theta}^n}{\Ee{\tilde{\theta} \sim \hat{\theta}\br{\textsf{O}_1^n}}{\rho_{\mathrm{Ha}} \br{\theta,\tilde{\theta}}}}} \\
	\stackrel{(ii)}{~=~}& \inf_{\hat{\theta}} \sum_{i=1}^{d}{\Ee{\theta \sim \pi}{\Ee{\textsf{O}_1^n \sim P_{\theta}^n}{\Ee{\tilde{\theta} \sim \hat{\theta}\br{\textsf{O}_1^n}}{\rho_i \br{\theta,\tilde{\theta}}}}}} \\
	~\geq~& \sum_{i=1}^{d}{\inf_{\hat{\theta}} \Ee{\theta \sim \pi}{\Ee{\textsf{O}_1^n \sim P_{\theta}^n}{\Ee{\tilde{\theta} \sim \hat{\theta}\br{\textsf{O}_1^n}}{\rho_i \br{\theta,\tilde{\theta}}}}}} \\
	\stackrel{(iii)}{~\geq~}& \sum_{i=1}^{d}{\Ee{(\theta,\theta') \sim \mu_i}{\rho_i\br{\theta,\theta'} \cdot \br{c \wedge \bar{c} - \mathbb{I}_{f_c}\br{P_{\theta}^n,P_{\theta'}^n}}}} \\
	\stackrel{(iv)}{~=~}& \sum_{i=1}^{d}{\Ee{(\theta,\theta') \sim \mu_i}{\br{c \wedge \bar{c} - \mathbb{I}_{f_c}\br{P_{\theta}^n,P_{\theta'}^n}}}} \\
	~\geq~& \sum_{i=1}^{d}{\underset{\theta,\theta': \rho_{\mathrm{Ha}}\br{\theta,\theta'}=1}{\min}{\br{c \wedge \bar{c} - \mathbb{I}_{f_c}\br{P_{\theta}^n,P_{\theta'}^n}}}} \\
	~=~& d \br{c \wedge \bar{c} - \underset{\theta,\theta': \rho_{\mathrm{Ha}}\br{\theta,\theta'}=1}{\max} \mathbb{I}_{f_c}\br{P_{\theta}^n,P_{\theta'}^n}},
	\end{align*}
	where $(i)$ is due to the fact that the minimax risk is lower bounded by the Bayesian risk, $(ii)$ is due to $\rho_{\mathrm{Ha}} (\theta,\theta') = \sum_{i=1}^{d}{\rho_i (\theta,\theta')}$, $(iii)$ is by Corollary~\ref{auxi-minimax-coro}, and $(iv)$ is by the fact that $\rho_i (\theta,\theta')=1$ under $\mu_i$ for every $i$.
\end{proof}

\begin{repcorollary}{assouad-corro}
	\label{repthm:assouad-corro}
	Let $\mathcal{O}$ be some set and $c \in [0,1]$. Define 
	\[
	\mathcal{P} \br{\mathcal{O}} := \bc{P_{\theta} \in \mathcal{P} \br{\mathcal{O}} : \theta \in \bc{-1,1}^d}
	\]
	be a class of probability measures induced by the parameter space $\Theta = \bc{-1,1}^d$. Suppose that there exists some function $\alpha: \bs{0,1} \rightarrow \mathbb{R}_+$, such that 
	\[
	\mathrm{He}^2 \br{P_{\theta},P_{\theta'}} \leq \alpha\br{c} , \quad \text{ if } \rho_{\mathrm{Ha}}\br{\theta,\theta'} = 1 ,
	\] 
	i.e. the two probability distributions $P_{\theta}$ and $P_{\theta'}$ (associated with the two parameters $\theta$ and $\theta'$ which differ  only in one coordinate) are sufficiently close w.r.t.\ Hellinger distance. Then the minimax risk of the parameter estimation problem with parameter space $\Theta = \bc{-1,1}^d$ and the loss function $\rho = \rho_{\mathrm{Ha}}$ is bounded below by
	\begin{equation}
	\label{assouad-practical-eq-1}
	\underline{\mathcal{R}}_{\rho_{\mathrm{Ha}}}^\star ~\geq~ d \cdot (c \wedge \bar{c}) \cdot \br{1 - \sqrt{\alpha\br{c} n}} .
	\end{equation}
\end{repcorollary}

\begin{proof}
	For any two $\theta,\theta' \in \Theta$ with $\rho_{\mathrm{Ha}}\br{\theta,\theta'} = 1$, we have
	\begin{align*}
	\mathbb{I}_{f_c}\br{P_{\theta}^n,P_{\theta'}^n} &~\leq~ \br{c \wedge \bar{c}} \cdot \mathrm{He} \br{P_{\theta}^n,P_{\theta'}^n} \\
	&~\leq~ \br{c \wedge \bar{c}} \cdot \sqrt{\sum_{i=1}^{n}{\mathrm{He}^2 \br{P_{\theta},P_{\theta'}}}} \\
	&~\leq~ \br{c \wedge \bar{c}} \cdot \sqrt{\alpha \br{c} n} \\
	\end{align*}
	Substituting this bound into Theorem~\ref{assouad-method} completes the proof.
\end{proof}

\begin{reptheorem}{main-theo-cost-classi}
	\label{repthm:main-theo-cost-classi}
	Let $\mathcal{F}$ be a VC class of binary-valued functions on $\mathcal{X}$ with VC dimension (refer Appendix~\ref{sec:vc-dimension-sec}) $V \geq 2$. Then for any $n \geq V$ and any $h \in [0,c \wedge \bar{c}]$, the minimax risk \eqref{cost-minmax-risk-eq} of the cost-sensitive binary classification problem $\br{\Theta_{h,\mathcal{F}}, \br{\mathcal{X} \times \bc{-1,1}}^n, \mathcal{F}, \ell_{d_{c}}}$ is lower bounded as follows: 
	\[
	\underline{\mathcal{R}}_{\Delta \ell_{d_c}}^\star ~\geq~ K \cdot (c \wedge \bar{c}) \cdot \min\br{\sqrt{\frac{(c \wedge \bar{c}) V}{n}},(c \wedge \bar{c}) \cdot \frac{V}{nh}}
	\]
	where $K > 0$ is some absolute constant. 
\end{reptheorem}

\begin{proof}
	Instantiate $\Theta = \mathcal{A} = B := \bc{-1,1}^{V-1}$, $\mathcal{O} = \mathcal{X} \times \bc{-1,1}$, and $A=\hat{b}$ in the general learning task. Then the resulting parameter estimation problem can be represented by $\br{B,\mathcal{O}^n,B,\rho_{\mathrm{Ha}}}$. Let $\mathcal{P} \br{\mathcal{O}^n} := \bc{P_b^n \in \mathcal{P} \br{\mathcal{O}}^n:b\in B}$ be the class of probability measures induced by the parameter space $B$. Then the minimax risk of this problem w.r.t.\ Hamming distance $\rho_{\mathrm{Ha}}$ is given by 
	\[
	\underline{\mathcal{R}}_{\rho_{\mathrm{Ha}}}^\star ~=~ \inf_{\hat{b}} \max_{b \in B} \Ee{\textsf{O}_1^n \sim P_b^n}{\Ee{b' \sim \hat{b}\br{\textsf{O}_1^n}}{\rho_{\mathrm{Ha}}\br{b,b'}}} .
	\]
	
	Observe that $\mathbb{P}_{P_b}\bs{\textsf{X}=x,\textsf{Y}=y} = \mathbb{P}_{P_b}\bs{\textsf{X}=x} \cdot \mathbb{P}_{P_b}\bs{\textsf{Y}=y | \textsf{X}=x}$ for $b \in B$ (by Bayes rule). For simplicity, we will write $\mathbb{P}_{P_b} \bs{\cdot}$ as $\mathbb{P}_b \bs{\cdot}$. Now we will construct these distributions.
	
	\textbf{Construction of marginal distribution }$\mathbb{P}_b\bs{\textsf{X}=x}, x \in \mathcal{X}$: Since $\mathcal{F}$ is a \textit{VC class} with \textit{VC dimension} $V$, $\exists \bc{x_1,...,x_V} \subset \mathcal{X}$ that is shattered, i.e. $\text{for any } \beta \in \bc{-1,1}^V, \exists f \in \mathcal{F} \text{ s.t. } f(x_i)=\beta_i, \forall{i \in \bs{V}}$. Given $p \in \bs{0,1/(V-1)}$, for each $b \in B$, let
	\begin{equation}
	\label{pb-marginal-eq}
	\mathbb{P}_b \bs{\textsf{X}=x} = \begin{cases}
	p, & \text{if } x=x_i \text{ for some } i \in \bs{V-1} \\
	1-(V-1)p, & \text{if } x=x_V \\
	0, & \text{otherwise}
	\end{cases}
	\end{equation}
	A particular value for $p$ will be chosen later. 
	
	\textbf{Construction of conditional distribution }$\mathbb{P}_b\bs{\textsf{Y}=y|\textsf{X}=x}, y \in \bc{-1,1}, x \in \mathcal{X}$: For each $b \in B$, let
	\begin{equation}
	\label{cost-regression}
	\eta_b \br{x} := \mathbb{P}_b \bs{\textsf{Y}=1 | \textsf{X}=x} = \begin{cases}
	c-h, & \text{if } x=x_i \text{ for some } i \in \bs{V-1}, \text{ and } b_i = -1 \\
	c+h, & \text{if } x=x_i \text{ for some } i \in \bs{V-1}, \text{ and } b_i = 1 \\
	0, & \text{otherwise} .
	\end{cases}
	\end{equation}
	Then the corresponding Bayes classifier can be given as follows:
	\begin{equation}
	\label{bayes-opt-func-proof}
	f_b^\star(x) = \begin{cases}
	-1, & \text{if } x=x_i \text{ for some } i \in \bs{V-1}, \text{ and } b_i = -1 \\
	1, & \text{if } x=x_i \text{ for some } i \in \bs{V-1}, \text{ and } b_i = 1 \\
	-1, & \text{otherwise}
	\end{cases}
	\end{equation}
	Now we show that $\bc{P_b : b \in B} \subseteq \bc{P_{\theta} : \theta \in \Theta_{h,\mathcal{F}}}$. First of all, from \eqref{cost-regression} we see that $\abs{\eta_b \br{x} - c} \geq h$ for all $x$ (indeed, $\abs{\eta_b \br{x} - c} = h$ when $x \in \bc{x_1,...,x_{V-1}}$, and $\abs{\eta_b \br{x} - c} = c$ otherwise). Second, because $\bc{x_1,...,x_V}$ is shattered by $\mathcal{F}$, there exists at least one $f \in \mathcal{F}$, such that $f_b^\star (x) = f (x)$ for all $x \in \bc{x_1,...,x_V}$. Thus, we get $B \subset \Theta_{h,\mathcal{F}}$.
	
	\textbf{Reduction to Parameter Estimation Problem: }
	We start with the following observation 
	\[
	\underline{\mathcal{R}}_{\Delta \ell_{d_c}}^\star ~\geq~ \inf_{\hat{f}}{\max_{b \in B}{\Ee{\bc{\br{\textsf{X}_i,\textsf{Y}_i}}_{i=1}^n \sim P_b^n}{\Ee{f \sim \hat{f}\br{\bc{\br{\textsf{X}_i,\textsf{Y}_i}}_{i=1}^n}}{\ell_{d_c} \br{b,f} - \ell_{d_c} \br{b,f_b^\star}}}}} ,
	\]
	since $B \subset \Theta_{h,\mathcal{F}}$. Define $M_\theta \br{x} := \mathbb{P}_{\theta} \bs{\textsf{X} = x}$, and $\eta_\theta\br{x} := \mathbb{P}_\theta \bs{\textsf{Y}=1 \mid \textsf{X}=x}$, for $x \in \mathcal{X}$. By Lemma~\ref{cost-sensitive-lemma}, for any classifier $f:\mathcal{X} \rightarrow \bc{-1,1}$ and any $\theta \in \Theta_{h,\mathcal{F}}$, we have 
	\[
	\ell_{d_c} \br{\theta,f} - \ell_{d_c} \br{\theta,f_\theta^\star} ~=~ \frac{1}{2} \cdot \Ee{\textsf{X} \sim M_\theta}{\abs{\eta_\theta \br{\textsf{X}} - c} \cdot \abs{f\br{\textsf{X}} - f_\theta^\star \br{\textsf{X}}}} .
	\]
	If $\theta \in \Theta_{h,\mathcal{F}}$, then using the above equation and the margin condition \eqref{noise-condition} we get
	\[
	\ell_{d_c} \br{\theta,f} - \ell_{d_c} \br{\theta,f_\theta^\star} ~\geq~ \frac{h}{2} \cdot \Ee{\textsf{X} \sim M_\theta}{\abs{f\br{\textsf{X}} - f_\theta^\star \br{\textsf{X}}}} = \frac{h}{2} \cdot \norm{f-f_\theta^\star}_{L_1 \br{M_\theta}},
	\]
	where $\norm{f}_{L_1 \br{M_\theta}}$ is given by \eqref{eq-lp-norm-func} with $p=1$ and $\mu = M_\theta$. Since there is no confusion, we can simply drop $M_\theta$ and write the $L_1$ norm as $\norm{\cdot}_{L_1}$.  Hence we have
	\begin{align*}
	& \inf_{\hat{f}}{\max_{b \in B}{\Ee{\bc{\br{\textsf{X}_i,\textsf{Y}_i}}_{i=1}^n \sim P_b^n}{\Ee{f \sim \hat{f}\br{\bc{\br{\textsf{X}_i,\textsf{Y}_i}}_{i=1}^n}}{\ell_{d_c} \br{b,f} - \ell_{d_c} \br{b,f_b^\star}}}}} \\
	~\geq~& \frac{h}{2} \cdot \inf_{\hat{f}}{\max_{b \in B}{\Ee{\bc{\br{\textsf{X}_i,\textsf{Y}_i}}_{i=1}^n \sim P_b^n}{\Ee{f \sim \hat{f}\br{\bc{\br{\textsf{X}_i,\textsf{Y}_i}}_{i=1}^n}}{\norm{f-f_b^\star}_{L_1}}}}} .
	\end{align*}
	Define 
	\[
	b_f ~:=~ \argmin_{b \in B}{\norm{f-f_b^\star}_{L_1}}.
	\]
	Then for any $b \in B$,
	\[
	\norm{f_{b_f}^\star-f_b^\star}_{L_1} \leq \norm{f_{b_f}^\star-f}_{L_1} + \norm{f-f_b^\star}_{L_1} \leq 2 \norm{f-f_b^\star}_{L_1} ,
	\]
	where the first inequality is due to the triangle inequality and the second follows from the definitions of $b_f$ and $f_{\theta}^\star$. Thus we have 
	\begin{align*}
	& \inf_{\hat{f}}{\max_{b \in B}{\Ee{\bc{\br{\textsf{X}_i,\textsf{Y}_i}}_{i=1}^n \sim P_b^n}{\Ee{f \sim \hat{f}\br{\bc{\br{\textsf{X}_i,\textsf{Y}_i}}_{i=1}^n}}{\ell_{d_c} \br{b,f} - \ell_{d_c} \br{b,f_b^\star}}}}} \\
	~\geq~& \frac{h}{4} \cdot \inf_{\hat{f}}{\max_{b \in B}{\Ee{\bc{\br{\textsf{X}_i,\textsf{Y}_i}}_{i=1}^n \sim P_b^n}{\Ee{f \sim \hat{f}\br{\bc{\br{\textsf{X}_i,\textsf{Y}_i}}_{i=1}^n}}{\norm{f_{b_f}^\star-f_b^\star}_{L_1}}}}} \\
	~=~& \frac{h}{4} \cdot \inf_{\hat{b}}{\max_{b \in B}{\Ee{\bc{\br{\textsf{X}_i,\textsf{Y}_i}}_{i=1}^n \sim P_b^n}{\Ee{b' \sim \hat{b}\br{\bc{\br{\textsf{X}_i,\textsf{Y}_i}}_{i=1}^n}}{\norm{f_{b'}^\star-f_b^\star}_{L_1}}}}} .
	\end{align*}
	For any two $b,b' \in B$, we have 
	\begin{align*}
	\norm{f_{b'}^\star-f_{b}^\star}_{L_1} &= \int_{\mathcal{X}}^{ }{\abs{f_{b'}^\star(x)-f_{b}^\star(x)}\mathbb{P}_b\bs{\textsf{X}=x} dx} \\
	&= p \sum_{i=1}^{V-1}{\abs{f_{b'}^\star(x_i)-f_{b}^\star(x_i)}} \\
	&= p \sum_{i=1}^{V-1}{\abs{b'_i-b_i}} \\
	&= 2 p \cdot \rho_{\mathrm{Ha}} \br{b,b'} ,
	\end{align*}
	where the second and third equalities are from \eqref{pb-marginal-eq} and \eqref{bayes-opt-func-proof}. Finally we get 
	\begin{align}
	& \inf_{\hat{f}}{\max_{b \in B}{\Ee{\bc{\br{\textsf{X}_i,\textsf{Y}_i}}_{i=1}^n \sim P_b^n}{\Ee{f \sim \hat{f}\br{\bc{\br{\textsf{X}_i,\textsf{Y}_i}}_{i=1}^n}}{\ell_{d_c} \br{b,f} - \ell_{d_c} \br{b,f_b^\star}}}}} \nonumber \\
	~\geq~& \frac{ph}{2} \cdot \inf_{\hat{b}}{\max_{b \in B}{\Ee{\bc{\br{\textsf{X}_i,\textsf{Y}_i}}_{i=1}^n \sim P_b^n}{\Ee{b' \sim \hat{b}\br{\bc{\br{\textsf{X}_i,\textsf{Y}_i}}_{i=1}^n}}{\rho_{\mathrm{Ha}} \br{b,b'}}}}} \nonumber \\
	~=~& \frac{ph}{2} \cdot \underline{\mathcal{R}}_{\rho_{\mathrm{Ha}}}^\star . \label{lower-bound-temp-eq}
	\end{align}
	
	\textbf{Applying Assouad's Lemma: }
	For any two $b,b' \in B$ we have
	\begin{align*}
	\mathrm{He}^2 \br{P_b,P_{b'}} &~=~ \sum_{i=1}^{V} \sum_{y \in \bc{-1,1}}^{}{\br{\sqrt{\mathbb{P}_b\bs{\textsf{X}=x_i,\textsf{Y}=y}}-\sqrt{\mathbb{P}_{b'}\bs{\textsf{X}=x_i,\textsf{Y}=y}}}^2} \\
	&~=~ p \sum_{i=1}^{V-1} \sum_{y \in \bc{-1,1}}^{}{\br{\sqrt{\mathbb{P}_b \bs{\textsf{Y}=y|\textsf{X}=x_i}}-\sqrt{\mathbb{P}_{b'}\bs{\textsf{Y}=y|\textsf{X}=x_i}}}^2} \\
	&~=~ p \sum_{i=1}^{V-1} {\ind{b_i \neq b'_i} \bc{\br{\sqrt{c-h} - \sqrt{c+h}}^2 + \br{\sqrt{\bar{c-h}} - \sqrt{\bar{c+h}}}^2}} \\
	&~=~ 2 p \br{1-\sqrt{c^2-h^2}-\sqrt{\bar{c}^2-h^2}} \rho_{\mathrm{Ha}}\br{b,b'} ,
	\end{align*}
	where the second and third equalities are from \eqref{pb-marginal-eq} and \eqref{cost-regression}. Thus the condition of the Corollary~\ref{assouad-corro} is satisfied with 
	\begin{equation*}
	2 p \br{1-\sqrt{c^2-h^2}-\sqrt{\bar{c}^2-h^2}} ~\leq~ 4 p \frac{h^2}{c \wedge \bar{c}} ~=:~ \alpha\br{c} ,
	\end{equation*}
	where the inequality is from Lemma~\ref{aux-bound-lemma}. Therefore we get
	\begin{align*}
	& \inf_{\hat{f}}{\max_{b \in B}{\Ee{\bc{\br{\textsf{X}_i,\textsf{Y}_i}}_{i=1}^n \sim P_b^n}{\Ee{f \sim \hat{f}\br{\bc{\br{\textsf{X}_i,\textsf{Y}_i}}_{i=1}^n}}{\ell_{d_c} \br{b,f} - \ell_{d_c} \br{b,f_b^\star}}}}} \\
	~\geq~& \frac{p h (V-1)}{2} \br{c \wedge \bar{c} - c \wedge \bar{c} \cdot \sqrt{4 p \frac{h^2}{c \wedge \bar{c}} n}} \\
	~=~& \frac{p h (V-1)}{2} \br{c \wedge \bar{c} - 2 h \sqrt{c \wedge \bar{c} \cdot p n}} ,
	\end{align*}
	where the first inequality is due to \eqref{assouad-practical-eq} and \eqref{lower-bound-temp-eq}. If we let $p=\frac{c \wedge \bar{c}}{9nh^2}$, then the term in the parentheses will be equal to $\frac{c \wedge \bar{c}}{3}$, and
	\[
	\inf_{\hat{f}}{\max_{b \in B}{\Ee{\bc{\br{\textsf{X}_i,\textsf{Y}_i}}_{i=1}^n \sim P_b^n}{\Ee{f \sim \hat{f}\br{\bc{\br{\textsf{X}_i,\textsf{Y}_i}}_{i=1}^n}}{\ell_{d_c} \br{b,f} - \ell_{d_c} \br{b,f_b^\star}}}}} ~\geq~ \frac{(c \wedge \bar{c})^2 (V-1)}{54nh} ,
	\]
	assuming that the condition $p \leq 1/(V-1)$ holds. This will be the case if $h \geq \sqrt{\frac{c \wedge \bar{c} (V-1)}{9n}}$. Therefore 
	\begin{equation}
	\label{cost-case-1}
	\underline{\mathcal{R}}_{\Delta \ell_{d_c}}^\star ~\geq~ \frac{(c \wedge \bar{c})^2 (V-1)}{54nh} , \quad \text{ if } h \geq \sqrt{\frac{c \wedge \bar{c} (V-1)}{9n}}.
	\end{equation}
	If $h \leq \sqrt{\frac{c \wedge \bar{c} (V-1)}{9n}}$, we can use the above construction with $\tilde{h} = \sqrt{\frac{c \wedge \bar{c} (V-1)}{9n}}$. Then, because $\Theta_{\tilde{h},\mathcal{F}} \subseteq \Theta_{h,\mathcal{F}}$ whenever $\tilde{h} \geq h$, we see that 
	\begin{align}
	& \inf_{\hat{f}}{\sup_{\theta \in \Theta_{h,\mathcal{F}}}{\Ee{\bc{\br{\textsf{X}_i,\textsf{Y}_i}}_{i=1}^n \sim P_{\theta}^n}{\Ee{f \sim \hat{f}\br{\bc{\br{\textsf{X}_i,\textsf{Y}_i}}_{i=1}^n}}{\ell_{d_c} \br{\theta,f} - \ell_{d_c} \br{\theta,f_\theta^\star}}}}} \nonumber \\
	~\geq~& \inf_{\hat{f}}{\sup_{\theta \in \Theta_{\tilde{h},\mathcal{F}}}{\Ee{\bc{\br{\textsf{X}_i,\textsf{Y}_i}}_{i=1}^n \sim P_{\theta}^n}{\Ee{f \sim \hat{f}\br{\bc{\br{\textsf{X}_i,\textsf{Y}_i}}_{i=1}^n}}{\ell_{d_c} \br{\theta,f} - \ell_{d_c} \br{\theta,f_\theta^\star}}}}} \nonumber \\
	~\geq~& \frac{(c \wedge \bar{c})^2 (V-1)}{54n\tilde{h}} \nonumber \\
	~=~& \frac{(c \wedge \bar{c})^{\frac{3}{2}}}{18} \sqrt{\frac{V-1}{n}}, \quad \text{ if } h \leq \sqrt{\frac{c \wedge \bar{c} (V-1)}{9n}}. \label{cost-case-2}
	\end{align}
	Observe that $\frac{(c \wedge \bar{c})^2 (V-1)}{54nh} \leq \frac{(c \wedge \bar{c})^{\frac{3}{2}}}{18} \sqrt{\frac{V-1}{n}}$ if $h \geq \sqrt{\frac{c \wedge \bar{c} (V-1)}{9n}}$, and $\frac{(c \wedge \bar{c})^2 (V-1)}{54nh} > \frac{(c \wedge \bar{c})^{\frac{3}{2}}}{18} \sqrt{\frac{V-1}{n}}$ otherwise. Then combining \eqref{cost-case-1} and \eqref{cost-case-2} completes the proof.
\end{proof}

\begin{lemma}
	\label{aux-bound-lemma}
	For $h \in \bs{0,c \wedge \bar{c}}$, we have $1-\sqrt{c^2-h^2}-\sqrt{\bar{c}^2-h^2} ~\leq~ 2 \frac{h^2}{c \wedge \bar{c}}$.
\end{lemma}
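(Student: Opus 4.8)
The plan is to decompose the left-hand side into two nonnegative pieces and bound each by rationalizing. First note that since $h \le c \wedge \bar{c}$ we have $h \le c$ and $h \le \bar{c}$, so both $c^2 - h^2 \ge 0$ and $\bar{c}^2 - h^2 \ge 0$; the two square roots are therefore real and nonnegative. Using $c + \bar{c} = 1$, I would rewrite the quantity of interest as
\[
1 - \sqrt{c^2 - h^2} - \sqrt{\bar{c}^2 - h^2} = \br{c - \sqrt{c^2-h^2}} + \br{\bar{c} - \sqrt{\bar{c}^2 - h^2}},
\]
which exhibits it as a sum of two terms, each of the form $a - \sqrt{a^2 - h^2}$ with $a \in \bc{c, \bar{c}}$.

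The key step is to bound each such term by multiplying and dividing by the conjugate: for $a \ge h \ge 0$,
\[
a - \sqrt{a^2 - h^2} = \frac{a^2 - (a^2 - h^2)}{a + \sqrt{a^2 - h^2}} = \frac{h^2}{a + \sqrt{a^2 - h^2}} \le \frac{h^2}{a},
\]
where the inequality uses $\sqrt{a^2 - h^2} \ge 0$. Applying this with $a = c$ and with $a = \bar{c}$ yields the upper bound $\frac{h^2}{c} + \frac{h^2}{\bar{c}}$ for the full expression.

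Finally I would collapse the two reciprocals using the elementary bounds $\frac{1}{c} \le \frac{1}{c \wedge \bar{c}}$ and $\frac{1}{\bar{c}} \le \frac{1}{c \wedge \bar{c}}$, which give $\frac{h^2}{c} + \frac{h^2}{\bar{c}} \le \frac{2 h^2}{c \wedge \bar{c}}$ and complete the argument. There is no genuine obstacle here: the only points requiring care are verifying that the radicands are nonnegative (guaranteed by the hypothesis $h \le c \wedge \bar{c}$) and carrying out the conjugate rationalization rather than a looser first-order expansion such as $\sqrt{1-t} \ge 1 - t$, which would fail to produce the correct constant.
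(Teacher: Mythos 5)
Your proof is correct, and it takes a genuinely different --- and cleaner --- route than the paper's. The paper expands $A = 1-\sqrt{c^2-h^2}-\sqrt{\bar{c}^2-h^2}$ as an infinite Taylor series in $h$, bounds the coefficient of each power via $\frac{1}{2}\br{\frac{1}{c}+\frac{1}{\bar{c}}} \le \frac{1}{c\wedge\bar{c}}$ and its higher-order analogues, uses $h \le c\wedge\bar{c}$ to dominate every higher power of $h/(c\wedge\bar{c})$ by the first, and then still needs the numerical fact that the tail coefficients $\frac{1}{4}+\frac{1}{8}+\frac{5}{64}+\frac{7}{128}+\cdots$ sum to at most $1$, a step it justifies only by appeal to a computer or to properties of the gamma function. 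Your decomposition $1 = c+\bar{c}$ followed by conjugate rationalization, $a - \sqrt{a^2-h^2} = \frac{h^2}{a+\sqrt{a^2-h^2}} \le \frac{h^2}{a}$, replaces all of that with two lines of elementary algebra: it is fully rigorous, needs no series manipulation or tail bookkeeping, and in fact yields the sharper intermediate bound $\frac{h^2}{c}+\frac{h^2}{\bar{c}} = \frac{h^2}{c\bar{c}}$, from which the stated bound follows since $c\bar{c} = (c\wedge\bar{c})(c\vee\bar{c}) \ge \frac{1}{2}(c\wedge\bar{c})$. One small correction to your closing remark: the first-order bound $\sqrt{1-t} \ge 1-t$ would \emph{not} fail here; applied as $\sqrt{a^2-h^2} = a\sqrt{1-h^2/a^2} \ge a - h^2/a$ it gives exactly the same per-term bound $h^2/a$, so it is an equally valid alternative to rationalization --- only an approximation such as $\sqrt{1-t}\approx 1-t/2$ used in the lower-bound direction would be invalid.
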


\begin{proof}
	Let $A = 1-\sqrt{c^2-h^2}-\sqrt{\bar{c}^2-h^2}$. Take series expansion of $A$ w.r.t.\ $h$ to get 
	\begin{align*}
	A ~=~& \frac{1}{2} \br{\frac{1}{c} + \frac{1}{\bar{c}}} h^2 + \frac{1}{8} \br{\frac{1}{c^3} + \frac{1}{\bar{c}^3}} h^4 + \frac{1}{16} \br{\frac{1}{c^5} + \frac{1}{\bar{c}^5}} h^6 + \frac{5}{128} \br{\frac{1}{c^7} + \frac{1}{\bar{c}^7}} h^8 \\
	& + \frac{7}{256} \br{\frac{1}{c^9} + \frac{1}{\bar{c}^9}} h^{10} + \frac{21}{1024} \br{\frac{1}{c^{11}} + \frac{1}{\bar{c}^{11}}} h^{12} + \frac{33}{2048} \br{\frac{1}{c^{13}} + \frac{1}{\bar{c}^{13}}} h^{14} + \cdots .
	\end{align*}
	Now $\frac{1}{2} \br{\frac{1}{c} + \frac{1}{\bar{c}}} \leq \frac{1}{c} \vee \frac{1}{\bar{c}} = \frac{1}{c \wedge \bar{c}}$ (since average is less than maximum). Thus
	\[
	A \leq h \bs{\frac{h}{c \wedge \bar{c}} + \frac{1}{4} \frac{h^3}{\br{c \wedge \bar{c}}^3} + \frac{1}{8} \frac{h^5}{\br{c \wedge \bar{c}}^5} + \frac{5}{64} \frac{h^7}{\br{c \wedge \bar{c}}^7} + \frac{7}{128} \frac{h^9}{\br{c \wedge \bar{c}}^9} + \frac{21}{512} \frac{h^{11}}{\br{c \wedge \bar{c}}^{11}} + \cdots}.
	\]
	Now we have 
	\begin{align*}
	h \leq c \wedge \bar{c} \implies& \frac{h}{c \wedge \bar{c}} \leq 1 \\
	\implies& \br{\frac{h}{c \wedge \bar{c}}}^\alpha \leq \frac{h}{c \wedge \bar{c}} , \forall{\alpha \geq 1} .
	\end{align*}
	Thus 
	\begin{align*}
	A \leq& h \bs{\frac{h}{c \wedge \bar{c}} + \frac{h}{c \wedge \bar{c}}\bc{\frac{1}{4} + \frac{1}{8} + \frac{5}{64} + \frac{7}{128} + \frac{21}{512} + \cdots}} \\
	\leq& h \bs{\frac{2 h}{c \wedge \bar{c}}} = \frac{2 h^2}{c \wedge \bar{c}} , 
	\end{align*}
	where the second inequality follows from the fact that (can be shown with the aid of computer or using the properties of gamma function)
	\[
	\frac{1}{4} + \frac{1}{8} + \frac{5}{64} + \frac{7}{128} + \frac{21}{512} + \cdots \leq 1 .
	\]
\end{proof}

\section{VC Dimension}
\label{sec:vc-dimension-sec}
A measure of complexity in learning theory should reflect which learning problems are inherently easier than others. The standard approach in statistical theory is to define the complexity of the learning problem through some notion of ``richness'', ``size'', ``capacity'' of the hypothesis class. 

The complexity measure proposed in \cite{vapnik1971uniform}, the \textit{Vapnik-Chervonenkis (VC) dimension} is a \textit{combinatorial} measure of the richness of classes of binary-valued functions when evaluated on samples. VC-dimension is independent of the underlying probability measure and of the particular sample, and hence is worst-case estimate with regard to these quantities.

We use the notation $x_1^m$ for a sequence $\br{x_1,\dots ,x_m} \in \mathcal{X}^m$, and for a class of binary-valued functions $\mathcal{F} \subseteq \bc{-1,1}^{\mathcal{X}}$, we denote by $\mathcal{F}_{|x_1^m}$
the \textit{restriction} of $\mathcal{F}$ to $x_1^m$:
\[
\mathcal{F}_{|x_1^m} = \bc{\br{f\br{x_1},\dots ,f\br{x_m}} \mid f \in \mathcal{F}} .
\]
Define the \textit{$m$-th shatter coefficient} of $\mathcal{F}$ as follows:
\[
\mathbb{S}_m \br{\mathcal{F}} := \max_{x_1^m \in \mathcal{X}^m}{\abs{\mathcal{F}_{|x_1^m}}} .
\]

\begin{definition}
	Let $\mathcal{F} \subseteq \bc{-1,1}^{\mathcal{X}}$ and let $x_1^m = \br{x_1,\dots ,x_m} \in \mathcal{X}^m$. We say $x_1^m$ is shattered by $\mathcal{F}$ if $\abs{\mathcal{F}_{|x_1^m}} = 2^m$; i.e. if  $\forall{b \in \bc{-1,1}^m}, \exists{f_{b} \in \mathcal{F}} \text{ s.t. } \br{f_{b}\br{x_1},\dots ,f_{b}\br{x_m}} = b$. The \textit{Vapnik-Chervonenkis (VC) dimension} of $\mathcal{F}$, denoted by $\text{VCdim}\br{\mathcal{F}}$, is the cardinality of the largest set of points in $\mathcal{X}$ that can be shattered by $\mathcal{F}$:
	\[
	\text{VCdim}\br{\mathcal{F}} = \max{\bc{m \in \mathbb{N} \mid \mathbb{S}_m \br{\mathcal{F}} = 2^m}} .
	\]
	If $\mathcal{F}$ shatters arbitrarily large sets of points in $\mathcal{X}$, then $\text{VCdim}\br{\mathcal{F}} = \infty$. If $\text{VCdim}\br{\mathcal{F}} < \infty$, we say that $\mathcal{F}$ is a VC class.
\end{definition}

\end{document}